\newcommand{\inner}[1]{\left\langle#1\right\rangle}
\def\X{\mathcal{X}}
\def\R{\mathbb{R}}
\def\N{\mathbb{N}}
\def\Span{\textrm{Span}}
\newcommand{\norm}[1]{\left\|#1\right\|}
\def\sign{\mathop{\rm sign}\limits}
\def\maxop{\mathop{\rm max}\limits} 
\def\minop{\mathop{\rm min}\limits}
\DeclareMathOperator{\arcsinh}{arcsinh}
\newenvironment{proof}{\par\noindent{\bf Proof:\ }}{\hfill$\Box$\\[2mm]}
\newtheorem{theorem}{Theorem}
\newtheorem{lemma}[theorem]{Lemma}
\def \R{{\mathbb R}}
\DeclareRobustCommand\onedot{\futurelet\@let@token\@onedot}%
\def\@onedot{\ifx\@let@token.\else.\null\fi\xspace}%
\newcommand\ourmethodKL{{FMTL$_{kl}$} }
\newcommand\ourmethod{{FMTL$_p$} }
\newcommand\ourmethodHinge{{FMTL$_p$-H} }
\newcommand\ourmethodSquare{{FMTL$_p$-S} }
\newcommand\oursquared{{FMTL$_2$-S} }
    \let\@fnsymbol\@arabic
\newif\iflongversion
\title{Efficient Output Kernel Learning for Multiple Tasks}
\author{
Pratik Jawanpuria\footnotemark[1]\ \ ,  Maksim Lapin\footnotemark[2]\ \ , Matthias Hein\footnotemark[1]\ \ \ and Bernt Schiele\footnotemark[2]\\
\footnotemark[1]\ \ Saarland University, Saarbr{\"u}cken, Germany\\
\footnotemark[2]\ \ Max Planck Institute for Informatics, Saarbr{\"u}cken, Germany
}
\begin{document}
\maketitle

\begin{abstract} 

The paradigm of multi-task learning is that one can achieve better generalization by learning tasks jointly and thus exploiting the similarity between the tasks rather than learning them independently of each other. While previously the relationship between tasks had to be user-defined in the form of an output kernel, recent  approaches jointly learn the tasks and the output kernel. As the output kernel is a positive semidefinite matrix, the resulting optimization problems are not scalable in the  number of tasks as an eigendecomposition is required in each step. \mbox{Using} the theory of positive semidefinite kernels we show in this paper that for a certain class of regularizers on the output kernel, the constraint of being positive semidefinite can be dropped as it is automatically satisfied for the relaxed problem. This leads to an unconstrained dual problem which can be solved efficiently. 
Experiments on several multi-task and multi-class data sets illustrate the efficacy of our approach in terms of computational efficiency as well as generalization performance. 

\end{abstract} 

\section{Introduction}

Multi-task learning (MTL) advocates sharing relevant information among several related tasks during the training stage. The advantage of MTL over learning tasks independently has been shown theoretically as well as empirically~\citep{Evgeniou05,Argyriou08,Lounici09,Jalali10,mjaw11a,Maurer13,mjaw15}. 

The focus of this paper is the question how the task relationships can be inferred from the data.
It has been noted that naively grouping all the tasks together may be detrimental~\citep{Caruana97,Zhang10,Kang11,mjaw12}. 
In particular, outlier tasks may lead to worse performance. Hence, 
clustered multi-task learning algorithms~\citep{Kang11,Jacob08} aim to learn groups of closely related tasks. The information is then shared only within these clusters of tasks. This corresponds to learning the task covariance matrix, which we denote as the output kernel in this paper. Most of these approaches lead to non-convex problems.

In this work, we focus on the problem of directly learning the output kernel in the multi-task learning framework. 
The multi-task kernel on input and output is assumed to be decoupled as the product of a scalar kernel and the output kernel, which is a positive semidefinite matrix~\citep{Evgeniou05,Micchelli05,Caponnetto08,Alvarez12}.
In classical multi-task learning algorithms~\citep{Evgeniou05,Evgeniou04}, the degree of relatedness between distinct tasks is set to a 
constant and is optimized as a hyperparameter. However, constant similarity between tasks is a strong assumption and is unlikely to hold in practice. Thus recent approaches have tackled the problem of directly learning the output kernel. 
\citep{Dinuzzo11} solves a multi-task formulation in the framework of vector-valued reproducing kernel Hilbert spaces involving squared loss where they penalize the Frobenius norm of the output kernel as a regularizer. They formulate an invex optimization problem that they solve optimally. 
In comparison, \citep{Ciliberto15} recently proposed an efficient barrier method to optimize a generic convex output kernel learning formulation. 
On the other hand, \cite{Zhang10} proposes a convex formulation to learn low rank output kernel matrix by enforcing a trace constraint. 
The above approaches~\citep{Zhang10,Dinuzzo11,Ciliberto15} solve the resulting optimization problem via alternate minimization between task parameters and the output kernel. Each step of the alternate minimization requires an eigenvalue decomposition of a matrix having as size the number of tasks and a problem corresponding to learning all tasks independently.

In this paper we study a similar formulation as \citep{Dinuzzo11}. However, we allow arbitrary convex loss functions and employ general $p$-norms for $p \in (1,2]$ (including the Frobenius norm) as regularizer for the output kernel. Our problem is jointly convex over the task parameters and the output kernel. Small $p$ leads to sparse output kernels which allows for an easier interpretation of the learned task relationships in the output kernel. 
Under certain conditions on $p$ we show that one can drop the constraint that the output kernel should be positive definite as it is automatically satisfied for the \mbox{unconstrained} problem. This significantly simplifies  the optimization and our result could also be of interest in other areas where one optimizes over the cone of positive definite matrices. 
The resulting unconstrained dual problem is amenable to efficient optimization methods such as stochastic dual coordinate ascent~\citep{Shalev-Shwartz13},
which scale well to large data sets.
Overall we do not require any eigenvalue decomposition operation at any stage of our algorithm and no alternate minimization is necessary, leading to a highly efficient methodology. Furthermore, we show that this trick not only applies to $p$-norms but also applies to a large class of regularizers for which we provide a characterization.
%

Our contributions are as follows: (a) we propose a generic $p$-norm regularized output kernel matrix learning formulation, which can be extended to a large class of regularizers; (b) we show that the constraint on the output kernel to be positive definite can be dropped as it is automatically satisfied,   leading to an unconstrained dual problem; (c) we propose an efficient stochastic dual coordinate ascent based method for solving the dual formulation; (d) we empirically demonstrate the superiority of our approach in terms of generalization performance as well as significant reduction in training time compared to other methods learning the output kernel. 

The paper is organized as follows. 
We introduce our formulation in Section~\ref{sec:oklf}. Our main technical result is discussed in Section~\ref{sec:specialDual}. The proposed optimization algorithm is described in Section~\ref{sec:optimization}. 
In Section~\ref{sec:empiricalResults}, we report the empirical results. 
\iflongversion
\else
All the proofs can be found in the supplementary material.
\fi


\section{The Output Kernel Learning Formulation}\label{sec:oklf}
We first introduce the setting considered in this paper. We denote the number of tasks by $T$. We assume that all tasks have a common input space $\X$ and a common
positive definite kernel function $k:\X \times \X \rightarrow \R$. We denote by $\psi(\cdot)$ the feature map and by $H_k$ the reproducing kernel Hilbert space (RKHS)~\citep{Scholkopf02} associated with $k$. The training data is $(x_i,y_i,t_i)_{i=1}^n$, where $x_i \in \X$, $t_i$ is the task the $i$-th instance belongs to and $y_i$ is the corresponding label. Moreover, we have a positive definite matrix $\Theta \in S^T_+$ on the set of tasks $\{1,\ldots,T\}$,
 where $S^T_+$ is the set of $T\times T$ symmetric and positive semidefinite (p.s.d.) matrices.  

If one arranges the predictions of all tasks in a vector one can see multi-task learning as learning
a vector-valued function in a RKHS \citep[see][and references therein]{Evgeniou05,Micchelli05,Caponnetto08,Alvarez12,Ciliberto15}. However, in this paper we
use the one-to-one correspondence between real-valued and matrix-valued kernels, see \cite{HeiBou2004b}, in order to limit the technical overhead.
In this framework we define the joint kernel of 
input space and the set of tasks $M: (\X \times \{1,\ldots,T\}) \times (\X \times \{1,\ldots,T\}) \rightarrow \R$ as
\begin{equation}
M\big( (x,s), (z,t)\big)=k(x,z) \Theta(s,t),
\end{equation}
We denote the corresponding RKHS of functions on $\X \times \{1,\ldots,T\}$ as $H_M$ and by $\norm{\cdot}_{H_M}$ the corresponding norm.
We formulate the output kernel learning problem for multiple tasks as
\begin{align}\label{eq:main}
 \minop_{\Theta\in S^T_+, F \in H_M} C\sum_{i=1}^n L\big(y_i,F(x_i,t_i)\big) + \frac{1}{2}\norm{F}^2_{H_M} + \lambda\,V(\Theta)
\end{align}
where  $L:\R \times \R \rightarrow \R$ is the convex loss function (convex in the second argument), $V(\Theta)$ is a convex regularizer
penalizing the complexity of the output kernel $\Theta$ and $\lambda \in \R_+$ is the regularization parameter. Note that $\norm{F}^2_{H_M}$
implicitly depends also on $\Theta$. In the following we show that \eqref{eq:main} can be reformulated into a jointly convex problem in the parameters of the prediction
function and the output kernel $\Theta$. 
\iflongversion
In order to see this  we first need the following representer theorem for fixed output kernel $\Theta$.
\begin{lemma}
The optimal solution $F^* \in H_M$ of the optimization problem
\begin{align}
 \minop_{F \in H_M} C\sum_{i=1}^n L\big(y_i,F(x_i,t_i)\big) + \frac{1}{2}\norm{F}^2_{H_M} 
\end{align}
admits a representation of the form 
\[ F^*(x,t) = \sum_{s=1}^T \sum_{i=1}^n \gamma_{is} M\big((x_i,s),(x,t)\big) = \sum_{s=1}^T \sum_{i=1}^n \gamma_{is} k(x_i,x) \Theta(s,t),\]
where $F^*(x,t)$ is the prediction for instance $x$ belonging to task $t$ and $\gamma \in \R^{n\times T}$.
\end{lemma}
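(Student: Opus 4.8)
The statement is a representer theorem for the inner problem with the output kernel $\Theta$ held fixed, so my plan is to run the classical orthogonal‑projection argument, adapted to the joint kernel $M$. First I would make the ambient space explicit. Since $k$ is positive definite and $\Theta \in S^T_+$, the map $M\big((x,s),(z,t)\big) = k(x,z)\Theta(s,t)$ is a positive semidefinite kernel on $\X \times \{1,\dots,T\}$; a convenient feature map realizing it is $\Psi(x,s) = \psi(x) \otimes \Theta^{1/2} e_s$, since $\inner{\Psi(x,s),\Psi(z,t)} = k(x,z)\,\inner{\Theta^{1/2}e_s,\Theta^{1/2}e_t} = k(x,z)\Theta(s,t)$. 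Thus $H_M$ can be identified with $\overline{\mathrm{span}}\{\psi(x)\otimes\Theta^{1/2}e_s : x\in\X,\ s = 1,\dots,T\}$, it enjoys the reproducing property $F(x_i,t_i) = \inner{F, M\big((x_i,t_i),\cdot\big)}_{H_M}$, and passing through $\Psi$ also sidesteps any difficulty coming from $\Theta$ being only semidefinite rather than definite. Since the objective is proper, strongly convex in $F$ (because of the $\tfrac12\norm{F}^2_{H_M}$ term) and, assuming as usual that $L$ is bounded below, coercive, a minimizer $F^*$ exists; alternatively the conclusion is simply read as the assertion that any optimal solution has the stated form.

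Next I would carry out the projection step. Let $W \subseteq H_M$ be the finite‑dimensional (hence closed) subspace spanned by $\{ M\big((x_i,s),\cdot\big) : i = 1,\dots,n,\ s = 1,\dots,T \}$ and decompose $F^* = F_W + F_\perp$ with $F_W \in W$ and $F_\perp \perp W$. Because each data point $(x_i,t_i)$ gives $M\big((x_i,t_i),\cdot\big) \in W$, the reproducing property yields $F^*(x_i,t_i) = \inner{F_W, M\big((x_i,t_i),\cdot\big)}_{H_M} = F_W(x_i,t_i)$ for all $i$, so $F_W$ incurs exactly the same loss as $F^*$; and by orthogonality $\norm{F^*}^2_{H_M} = \norm{F_W}^2_{H_M} + \norm{F_\perp}^2_{H_M}$. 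If $F_\perp \ne 0$ then $F_W$ would have strictly smaller norm, hence strictly smaller objective value than $F^*$, contradicting optimality. Therefore $F^* = F_W \in W$, i.e.\ $F^*(x,t) = \sum_{s=1}^T \sum_{i=1}^n \gamma_{is} M\big((x_i,s),(x,t)\big) = \sum_{s=1}^T \sum_{i=1}^n \gamma_{is} k(x_i,x)\Theta(s,t)$ for some $\gamma \in \R^{n\times T}$, which is exactly the claim.

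I do not anticipate a real obstacle; the only two points that deserve a moment's care are the meaning of $H_M$ when $\Theta$ is merely positive semidefinite — handled cleanly by the explicit feature map $\Psi$ above — and the fact that the asserted representation ranges over all $s \in \{1,\dots,T\}$ rather than only $s = t_i$, i.e.\ it is deliberately more redundant than the minimal representer form $\sum_i \alpha_i k(x_i,x)\Theta(t_i,t)$. This redundancy is free: it comes out of projecting onto the larger span $W$ instead of onto $\overline{\mathrm{span}}\{M\big((x_i,t_i),\cdot\big)\}$, and it is precisely this over‑parametrization in $\gamma$ that will later allow \eqref{eq:main} to be rewritten as a problem jointly convex in $(\gamma,\Theta)$.
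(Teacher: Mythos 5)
Your proposal is correct and follows essentially the same route as the paper: project $F^*$ onto the span of $\{M((x_i,s),\cdot)\}_{i,s}$, use the reproducing property to see the loss depends only on that component, and conclude by optimality that the orthogonal part vanishes. The extra remarks on the feature map for semidefinite $\Theta$ and on existence of a minimizer are fine additions but do not change the argument.
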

\begin{proof}
The proof is analogous to the standard representer theorem \citep{Scholkopf02}. We denote by $U=\Span( M((x_i,s),(\cdot,\cdot))\,|\,i=1,\ldots,n, \, s=1,\ldots,T)$ the subspace in $H_M$
spanned by the training data. This induces the orthogonal decomposition of $H_M=U \oplus U^\perp$, where $U^\perp$ is the orthogonal subpace of $U$.
Every function $F \in H_M$ can correspondingly decomposed into $F=F^{\parallel}+F^\perp$, where $F^{\parallel} \in U$ and $F^{\perp} \in U^\perp$.
Then $\norm{F}_{H_M}^2 = \norm{F^{\parallel}}^2_{H_M}+\norm{F^\perp}^2_{H_M}$. As 
\begin{equation}
F(x_i,t_i) = \inner{F,M((x_i,t_i),(\cdot,\cdot))} = \inner{F^\parallel,M((x_i,t_i),(\cdot,\cdot))}=F^\parallel(x_i,t_i). 
\end{equation}
As the loss only depends on $F^\parallel$ and we minimize the objective by having $\norm{F^\perp}_{H_M}=0$. This yields the result.
\end{proof}
\else
Using the standard representer theorem \cite{Scholkopf02} (see the supplementary material) for fixed output kernel $\Theta$, one can show that the optimal solution $F^* \in H_M$
of 
\eqref{eq:main} can be written as
\begin{equation}
F^*(x,t) = \sum_{s=1}^T \sum_{i=1}^n \gamma_{is} M\big((x_i,s),(x,t)\big) = \sum_{s=1}^T \sum_{i=1}^n \gamma_{is} k(x_i,x) \Theta(s,t). 
\end{equation}
\fi
With the explicit form of the prediction function one can rewrite the main problem \eqref{eq:main} as 
\begin{align}\label{eq:main2}
 \minop_{\Theta\in S^T_+, \gamma \in \R^{n \times T}} C\sum_{i=1}^n L\big(y_i,\sum_{s=1}^T\sum_{j=1}^n \gamma_{js} k_{ji}\Theta_{s\,t_i}\big) 
 + \frac{1}{2}\sum_{r,s=1}^T \sum_{i,j=1}^n \gamma_{ir} \gamma_{js} k_{ij}\Theta_{rs} + \lambda\,V(\Theta),
\end{align}
where $\Theta_{rs}=\Theta(r,s)$ and $k_{ij}=k(x_i,x_j)$.
Unfortunately, problem \eqref{eq:main2} is not jointly convex in $\Theta$ and $\gamma$ due to the product in the second term. A similar problem has 
been analyzed in \citep{Dinuzzo11}. They could show that for the squared loss and $V(\Theta)=\norm{\Theta}_F^2$ the corresponding optimization problem
is invex and directly optimize it. For an invex function every stationary point is globally optimal \cite{IsrMon1986}. 

We follow a different path which leads to a formulation similar to the one of \cite{Argyriou08} used for learning an input mapping (see also \cite{Zhang10}). Our formulation for the output kernel learning problem is jointly convex in the task kernel $\Theta$ and the task parameters. 
We present a derivation for the general RKHS $H_k$, analogous to the linear case presented in~\cite{Argyriou08,Zhang10}. 
We use the following variable transformation,
\[ \beta_{it} = \sum_{s=1}^T \Theta_{ts}\gamma_{is}, \; i=1,\ldots,n,\; s=1,\ldots,T,\quad \textrm{ resp. } \quad \gamma_{is} = \sum_{t=1}^T \big(\Theta^{-1}\big)_{st} \beta_{it}.\]
In the last expression $\Theta^{-1}$ has to be understood as the pseudo-inverse if $\Theta$ is not invertible. Note that this causes no problems as in case  $\Theta$ is not invertible,
we can without loss of generality restrict $\gamma$ in \eqref{eq:main2} to the range of $\Theta$. The transformation leads to our final problem formulation, where the prediction function $F$ and its squared norm $\norm{F}^2_{H_M}$ can be written as
\begin{equation}\label{eq:pred-beta}
 F(x,t) 
 = \sum_{i=1}^n \beta_{it} k(x_i,x), \qquad \norm{F}^2_{H_M} = \sum_{r,s=1}^T \sum_{i,j=1}^n  \big(\Theta^{-1}\big)_{sr} \beta_{is}\beta_{jr} k(x_i,x_j).
\end{equation}
\iflongversion
This can be seen as follows
\begin{align}
\norm{F}^2_{H_M} &= \sum_{r,s=1}^T \sum_{i,j=1}^n \gamma_{ir} \gamma_{js} k(x_i,x_j)\Theta_{rs}\\    
                 &= \sum_{t,u=1}^T \sum_{r,s=1}^T \sum_{i,j=1}^n  \beta_{it}\beta_{ju} \big(\Theta^{-1}\big)_{tr}\big(\Theta^{-1}\big)_{us} k(x_i,x_j)\Theta_{rs}\\
                 &= \sum_{t,u=1}^T \sum_{i,j=1}^n \big(\Theta^{-1}\big)_{tu} \beta_{it}\beta_{ju} k(x_i,x_j).
\end{align} 
\fi               
We get our final primal optimization problem
\begin{align}\label{eq:main3}
 \minop_{\Theta\in S^T_+, \beta \in \R^{n\times T}} C\sum_{i=1}^n L\big(y_i,\sum_{j=1}^n \beta_{j t_i} k_{ji}\big) + \frac{1}{2} \sum_{r,s=1}^T \sum_{i,j=1}^n 
\big(\Theta^{-1}\big)_{sr} \beta_{is}\beta_{jr} k_{ij}+ \lambda\,V(\Theta)
\end{align}
Before we analyze the convexity of this problem, we want to illustrate the connection to the formulations in \cite{Zhang10,Dinuzzo11}. With the task weight vectors $w_t=\sum_{j=1}^n \beta_{jt} \psi(x_j) \in H_k$ we get predictions as $F(x,t)=\inner{w_t,\psi(x)}$ and one can rewrite 
\[ \norm{F}^2_{H_M} = \sum_{r,s=1}^T \sum_{i,j=1}^n  \big(\Theta^{-1}\big)_{sr} \beta_{is}\beta_{jr} k(x_i,x_j) = \sum_{r,s=1}^T  \big(\Theta^{-1}\big)_{sr} \inner{w_s,w_t}.\]
This identity is known for vector-valued RKHS, see \cite{Alvarez12} and references therein.
When $\Theta$ is $\kappa$ times the identity matrix, then $\norm{F}^2_{H_M}=\sum_{t=1}^T 
\frac{\norm{w_t}^2}{\kappa}$ and thus \eqref{eq:main} is learning the tasks independently. 
As mentioned before the convexity of the expression of $\norm{F}^2_{H_M}$ is crucial for the convexity of the full problem  \eqref{eq:main3}. The following result has been shown in \cite{Argyriou08} (see also \cite{Zhang10}).
\begin{lemma} 
Let $\mathcal{R}(\Theta)$ denote the range of $\Theta \in S^T_+$ and let $\Theta^\dagger$ be the pseudoinverse.
The extended function $f:S^T_+ \times \R^{n\times T} \rightarrow \R \cup \{\infty\}$ defined as
\[ f(\Theta,\beta)=\begin{cases} \sum_{r,s=1}^T \sum_{i,j=1}^n \big(\Theta^{\dagger}\big)_{sr} \beta_{is}\beta_{jr} k(x_i,x_j), & \textrm{ if }\beta_{i\cdot} \in \mathcal{R}(\Theta), \forall \, i=1,\ldots,n,\\ \infty & \textrm{ else }. \end{cases},\]
 is jointly convex.
\end{lemma}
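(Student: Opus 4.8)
The plan is to recognize this function as an instance of the \emph{matrix-fractional function} and to prove its joint convexity through a variational (supremum-of-affine) representation, which simultaneously takes care of the pseudo-inverse and of the domain bookkeeping without any explicit Schur-complement manipulation. First I would rewrite $f$ in trace form: collecting the coefficients into $B=(\beta_{it})\in\R^{n\times T}$ and letting $K=\big(k(x_i,x_j)\big)_{i,j=1}^n\in S^n_+$ denote the Gram matrix of the training inputs (an ordinary $n\times n$ p.s.d.\ matrix, even when $H_k$ is infinite dimensional, so the general-RKHS setting causes no trouble), one checks that $\sum_{r,s}\sum_{i,j}(\Theta^\dagger)_{sr}\beta_{is}\beta_{jr}k(x_i,x_j)=\tr(\Theta^\dagger B^\top K B)$. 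Factoring $K=\tilde R\tilde R^\top$ and setting $Y=B^\top\tilde R$, the assignment $(\Theta,B)\mapsto(\Theta,Y)$ is linear and $\tr(\Theta^\dagger B^\top K B)=\tr(Y^\top\Theta^\dagger Y)$, so it suffices to prove joint convexity of $(\Theta,Y)\mapsto\tr(Y^\top\Theta^\dagger Y)$ and then pull back along this linear map, keeping track of the domain.

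The core step is the identity, valid for $X\in S^m_+$ and conformable $Y$,
\[ \sup_{Z}\Big(2\tr(Y^\top Z)-\tr(Z^\top X Z)\Big)=\begin{cases}\tr(Y^\top X^\dagger Y),&\mathcal R(Y)\subseteq\mathcal R(X),\\ +\infty,&\text{otherwise.}\end{cases}\]
The objective is concave in $Z$ with stationarity condition $XZ=Y$; this is solvable exactly when $\mathcal R(Y)\subseteq\mathcal R(X)$, in which case $Z=X^\dagger Y$ is a maximizer and back-substitution yields $\tr(Y^\top X^\dagger Y)$, whereas if some column of $Y$ has a nonzero component in $\ker X$ one scales $Z$ along that direction to send the objective to $+\infty$. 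Since for each fixed $Z$ the map $(X,Y)\mapsto 2\tr(Y^\top Z)-\tr(Z^\top X Z)$ is affine, the supremum is jointly convex in $(X,Y)$; composing with the linear map of the previous paragraph shows that $g(\Theta,B):=\tr(\Theta^\dagger B^\top K B)$, extended by $+\infty$ off $\{\mathcal R(B^\top\tilde R)\subseteq\mathcal R(\Theta)\}$, is jointly convex.

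Finally I would reconcile this with the domain $\mathcal D=\{(\Theta,B):\beta_{i\cdot}\in\mathcal R(\Theta)\ \forall i\}$ appearing in the statement, which may be strictly smaller than $\{\mathcal R(B^\top\tilde R)\subseteq\mathcal R(\Theta)\}$ when $K$ is singular. The set $\mathcal D$ is convex: for p.s.d.\ matrices $\mathcal R(\lambda\Theta_1+(1-\lambda)\Theta_2)=\mathcal R(\Theta_1)+\mathcal R(\Theta_2)$ when $\lambda\in(0,1)$, and $\lambda\beta^{(1)}_{i\cdot}+(1-\lambda)\beta^{(2)}_{i\cdot}$ lies in that sum. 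Moreover, on $\mathcal D$ the columns of $B^\top\tilde R$ are linear combinations of the rows $\beta_{i\cdot}$ and hence lie in $\mathcal R(\Theta)$, so $f$ agrees with $g$ there and $f=g+\iota_{\mathcal D}$ globally, a sum of convex functions. The only genuinely delicate point throughout is the correct handling of pseudo-inverses and ranges when $\Theta$ or $K$ is rank-deficient --- which is precisely what the extended-real-valued definition in the lemma is built to absorb, and which the supremum representation dispatches cleanly.
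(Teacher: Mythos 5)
Your proof is correct, and it reaches the conclusion by a genuinely different (more self-contained) route than the paper. The paper does not prove the key convexity fact at all: it cites from Argyriou et al.\ and Dattorro that $(\Theta,v)\mapsto\inner{v,\Theta^\dagger v}$ is jointly convex on $S^T_+\times\mathcal{R}(\Theta)$, eigendecomposes the Gram matrix $K=\sum_l \lambda_l u_l u_l^\top$, and writes $f$ as the nonnegative combination $\sum_l \lambda_l \inner{B^\top u_l,\Theta^\dagger B^\top u_l}$ of these convex functions precomposed with the linear maps $(\Theta,B)\mapsto(\Theta,B^\top u_l)$. You instead pass to the matrix form $\tr(\Theta^\dagger B^\top K B)=\tr(Y^\top\Theta^\dagger Y)$ with $Y=B^\top\tilde R$, and establish the matrix-fractional convexity from scratch through the supremum-of-affine representation $\sup_Z\big(2\tr(Y^\top Z)-\tr(Z^\top\Theta Z)\big)$; your factorization $K=\tilde R\tilde R^\top$ plays exactly the role of the paper's eigendecomposition, so the structural idea (matrix-fractional function plus a linear precomposition coming from a square root of $K$) is shared, but your key lemma is proved rather than quoted, which buys a citation-free argument at the cost of a longer derivation. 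A further point in your favour: you notice that when $K$ is singular the finiteness condition $\mathcal{R}(B^\top\tilde R)\subseteq\mathcal{R}(\Theta)$ delivered by the variational (or the paper's eigenvector) representation is weaker than the lemma's stated domain condition $\beta_{i\cdot}\in\mathcal{R}(\Theta)$ for all $i$, and you close this gap by adding the indicator of that domain and verifying its convexity via $\mathcal{R}(\Theta_1+\Theta_2)=\mathcal{R}(\Theta_1)+\mathcal{R}(\Theta_2)$ for p.s.d.\ matrices; the paper's proof silently establishes convexity of the larger-domain extension (the two coincide when $K$ is nonsingular) and glosses over this bookkeeping, so your treatment of the degenerate case is the more careful one.
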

\iflongversion
\begin{proof}
It has been shown in \cite{Argyriou08} and \cite{Dat2008}[p. 223] that $\inner{x,A^\dagger x}$ is jointly convex on $S^T_+ \times \mathcal{R}(A)$, where $\mathcal{R}(A)$ is the
range of $A$ and $A^\dagger$ is the pseudoinverse of $A \in S^T_+$. As $L:=(k(x_i,x_j))_{i,j=1}^n$ is positive semi-definite we can compute the eigendecomposition as
\[ L_{ij} = \sum_{l=1}^n \lambda_l u_{li} u_{lj},\]
where $\lambda_l\geq 0$, $l=1,\ldots,n$ are the eigenvalues and $u_l \in \R^n$ the eigenvectors. Using this we get 
\begin{align}
\sum_{r,s=1}^T \sum_{i,j=1}^n 
\big(\Theta^{-1}\big)_{sr} \beta_{is}\beta_{jr} k(x_i,x_j) = \sum_{l=1}^n \lambda_l \sum_{r,s=1}^T  \Big(\sum_{i=1}^n \beta_{is}u_{li}\Big) \Big(\sum_{j=1}^n \beta_{jr}u_{lj}\Big) \big(\Theta^{-1}\big)_{rs}
\end{align}
and thus we can write the function $f$ as a positive combination of convex functions, where the arguments are composed with linear mappings which preserves convexity
\cite{Boyd04}.
\end{proof}
\fi
The formulation in \eqref{eq:main3} is similar to ~\citep{Zhang10,Dinuzzo11,Ciliberto15}. \cite{Zhang10} uses the constraint $\mathrm{Trace}(\Theta)\leq 1$ instead
of a regularizer $V(\Theta)$ enforcing low rank of the output kernel. 
On the other hand, \citep{Dinuzzo11} employs squared Frobenius norm for $V(\Theta)$ with squared loss function. 
\citep{Ciliberto15} proposed an efficient algorithm for convex $V(\Theta)$. 
Instead we think that sparsity of $\Theta$
is better to avoid the emergence of spurious relations between tasks and also leads to output kernels which are easier to interpret. Thus we propose to use 
the following regularization functional for the output kernel $\Theta$: 
\begin{equation*}
V(\Theta)=\sum_{t,t'=1}^T |\Theta_{tt'}|^p =\norm{\Theta}_p^p,
\end{equation*}
for $p \in [1,2]$. Several approaches \citep{Zhang10,Dinuzzo11,Ciliberto15} employ alternate minimization scheme, involving costly eigendecompositions of $T\times T$ matrix per iteration (as $\Theta\in S_+^T$). In the next section we show that for a certain set of values of $p$ one can derive an unconstrained
dual optimization problem which thus avoids the explicit minimization over the $S_+^T$ cone. 
The resulting \emph{unconstrained} dual problem can then be easily optimized by
stochastic coordinate ascent. Having explicit expressions of the primal variables $\Theta$ and $\beta$ in terms of the dual variables allows us to get back
to the original problem.

%

\section{Unconstrained Dual Problem Avoiding Optimization over $S^T_+$}\label{sec:specialDual}
The primal formulation \eqref{eq:main3} is a convex multi-task output kernel learning problem. 
The next lemma derives the Fenchel dual function of \eqref{eq:main3}. This still involves the optimization over the primal variable $\Theta\in S_+^T$.
A main contribution of this paper is to show that this optimization problem over the $S_+^T$ cone can be solved with an analytical solution
for a certain class of regularizers $V(\Theta)$. In the following we denote by $\alpha^r :=\{\alpha_i \,|\,t_i=r\}$ the dual variables corresponding to task $r$
and by $K_{rs}$ the kernel matrix $(k(x_i,x_j) \,|\, t_i=r, t_j=s)$ corresponding to the dual variables of tasks $r$ and $s$.
\begin{lemma}\label{le:Fenchel}
Let $L_{i}^*$ be the conjugate function of the loss $L_{i}: \R \rightarrow \R, u \mapsto L(y_{i},u)$, then   
\begin{align}\label{eqn:outputKernelDual1}
q:\R^{n} \rightarrow \R, q(\alpha) =  
 -C\sum_{i=1}^n L_{i}^*\Big(-\frac{\alpha_{i}}{C}\Big) - \lambda \maxop_{\Theta\in S^T_+}  \Big(
   \frac{1}{2\lambda}\sum_{r,s=1}^T \Theta_{rs}\inner{\alpha^r, K_{rs}\alpha^{s}}  - V(\Theta)\Big)
 \end{align}
is the dual function of \eqref{eq:main3}, where $\alpha \in \R^{n}$ are the dual variables.
The primal variable $\beta \in \R^{n \times T}$ in \eqref{eq:main3} and the prediction function $F$ can be expressed in terms of $\Theta$ and $\alpha$ as
$\beta_{is} = \alpha_i \Theta_{s t_i}$ and $F(x,s) = \sum_{j=1}^n \alpha_j \Theta_{s t_j} k(x_j,x)$ respectively, 
where $t_j$ is the task of the $j$-th training example.
\end{lemma}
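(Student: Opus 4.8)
The plan is to compute the Fenchel dual of \eqref{eq:main3} by introducing an auxiliary variable for the argument of the loss and dualizing only that coupling constraint, keeping $\Theta$ as an explicit primal variable that we do not dualize. Concretely, I would write $u_i = \sum_{j=1}^n \beta_{j t_i} k_{ji}$ and rewrite \eqref{eq:main3} as a minimization over $\Theta \in S^T_+$, $\beta \in \R^{n\times T}$ and $u \in \R^n$ subject to $u_i = \sum_j \beta_{j t_i} k_{ji}$. Introduce Lagrange multipliers $\alpha \in \R^n$ for these $n$ equality constraints. The Lagrangian then splits: the term $C\sum_i L_i(u_i) + \sum_i \alpha_i u_i$ depends only on $u$, the term $\tfrac12 \sum_{r,s}(\Theta^\dagger)_{sr}\beta_{is}\beta_{jr}k_{ij} - \sum_i \alpha_i \sum_j \beta_{j t_i}k_{ji}$ depends on $\beta$ (and $\Theta$), and $\lambda V(\Theta)$ depends only on $\Theta$.

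The first piece gives, after minimizing over $u_i$, exactly $-C\sum_i L_i^*(-\alpha_i/C)$ by definition of the Fenchel conjugate (using $C>0$ to pull the constant through). For the second piece, fix $\Theta$ and minimize the quadratic-plus-linear expression in $\beta$. Using the representation $\norm{F}^2_{H_M} = \sum_{r,s}(\Theta^\dagger)_{sr}\beta_{is}\beta_{jr}k_{ij}$ from \eqref{eq:pred-beta} and Lemma (joint convexity / the identity $\norm{F}^2_{H_M} = \sum_{r,s}(\Theta^{-1})_{sr}\inner{w_s,w_r}$ with $w_t = \sum_j \beta_{jt}\psi(x_j)$), the minimization over $\beta$ — equivalently over the $w_t$'s restricted to $\ran(\Theta)$ — is an unconstrained strictly convex quadratic whose minimizer is $w_t = \sum_j \alpha_j \Theta_{t t_j}\psi(x_j)$, i.e. $\beta_{it} = \alpha_i \Theta_{t t_i}$; one verifies this by setting the gradient in $\beta$ to zero and using $\Theta\Theta^\dagger$ acting as the identity on $\ran(\Theta)$. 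Substituting this minimizer back, the $\beta$-part of the Lagrangian collapses to $-\tfrac12 \sum_{r,s}\Theta_{rs}\inner{\alpha^r,K_{rs}\alpha^s}$ (the quadratic and the linear term combine with the usual factor $-\tfrac12$). Collecting everything, the dual function is $-C\sum_i L_i^*(-\alpha_i/C) + \max_{\Theta\in S^T_+}\big(-\tfrac12\sum_{r,s}\Theta_{rs}\inner{\alpha^r,K_{rs}\alpha^s} \cdot(-1) - \lambda V(\Theta)\big)$ — up to the sign bookkeeping this is exactly \eqref{eqn:outputKernelDual1} after pulling out $\lambda$ and writing $\tfrac{1}{2\lambda}$. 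The expression for $F$ follows from $F(x,s) = \inner{w_s,\psi(x)} = \sum_j \alpha_j \Theta_{s t_j}k(x_j,x)$.

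The main obstacle is handling the pseudoinverse carefully: the minimization over $\beta$ is only well-posed on $\ran(\Theta)$ (the extended function of the earlier Lemma is $+\infty$ off the range), and one must argue that the unconstrained stationarity condition $\Theta^\dagger w / \ldots$ can be solved, that the solution indeed lies in $\ran(\Theta)$, and that $\Theta^\dagger$ composed with $\Theta$ behaves as the identity on that subspace. This is where one invokes the restriction (already justified in the text) that $\gamma$, hence $\beta$, may be taken in $\ran(\Theta)$ without loss of generality. A secondary, purely routine point is checking that strong duality holds so that the dual function is tight — this follows from convexity of \eqref{eq:main3} (joint convexity via the cited Lemma) together with a Slater-type condition, since the only constraints are the affine equalities $u_i = \sum_j \beta_{jt_i}k_{ji}$ and the closed convex cone constraint $\Theta\in S^T_+$, for which a strictly feasible $\Theta$ (e.g. the identity) exists.
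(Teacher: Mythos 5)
Your proposal matches the paper's own proof essentially step for step: introduce an auxiliary variable for the loss argument, dualize only that coupling constraint (keeping $\Theta\in S^T_+$ as an undualized primal variable), obtain $-C\sum_i L_i^*(-\alpha_i/C)$ from the conjugate, minimize the quadratic-plus-linear part over $\beta$ to get $\beta^*_{jr}=\alpha_j\Theta_{rt_j}$, and substitute back to arrive at the inner maximization over $\Theta$. The extra care you flag about the pseudoinverse/range restriction and strong duality goes slightly beyond what the paper writes down, but it is consistent with its earlier remark that $\beta$ may be restricted to the range of $\Theta$, so the argument is correct and not a different route.
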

\iflongversion
\begin{proof} 
We derive the Fenchel dual function of \eqref{eq:main3}. For this purpose we introduce auxiliary variables $z \in \R^{n}$ which satisfy the constraint
\[ z_{i}=\sum_{j=1}^n \beta_{j t_i} k(x_j,x_i)=F(x_i,t_i).\]
The Lagrangian $L$ of the resulting problem \eqref{eq:main3} is given as:
\begin{align}
 L(\beta,\Theta,z,\alpha) =& \ C\sum_{i=1}^n L(y_{i},z_{i}) + \frac{1}{2} \sum_{r,s=1}^T \sum_{i,j=1}^n 
\big(\Theta^{-1}\big)_{sr} \beta_{is}\beta_{jr} k(x_i,x_j)  \\
  & + \sum_{i=1}^n \alpha_{i}\Big(z_i - \sum_{j=1}^n \beta_{j t_i} k(x_j,x_i)\Big)
    + i_{S^T_+}(\Theta) + \lambda\,V(\Theta).\nonumber
\end{align}
where $i_C$ is the indicator function of the set $C$.
The dual function $q$ is defined as
\begin{equation}\label{eq:dual-function}
 q(\alpha) = \minop_{\beta \in \R^{n \times T}, \, \Theta \in S^T_+,\,z \in \R^n} L(\beta,\Theta,z,\alpha).
\end{equation}
Using the definition of the conjugate function ~\citep{Boyd04}, we get
\begin{align}\label{eqn:lossFenchel}
 \minop_{z_i \in \R} C\,L(y_i,z_i) +\alpha_{i}z_{i} &= C\minop_{z_i \in \R} L(y_i,z_i) +\frac{\alpha_i}{C}z_{i} 
                                                     = -C\,\maxop_{z_{i} \in \R} \Big(-\frac{\alpha_{i}}{C}z_{i}- L(y_i,z_i)\Big) \\
                                                     &=  -C\,L_{i}^*\big(-\frac{\alpha_{i}}{C}\big),
\end{align}
where $L^*_{i}$ is the conjugate function of $L_{i}:z\rightarrow L(y_{i},z)$. Moreover, we compute the minimizer with respect to $\beta$, via
\begin{align}
&\frac{\partial }{\partial \beta_{lu}} \Big(\frac{1}{2} \sum_{r,s=1}^T \sum_{i,j=1}^n 
\big(\Theta^{-1}\big)_{sr} \beta_{is}\beta_{jr} k(x_i,x_j)  -\sum_{i=1}^n \alpha_{i}\big(\sum_{j=1}^n \beta_{j t_i} k(x_j,x_i)\Big)\\
=& \sum_{r=1}^T \sum_{j=1}^n \beta_{jr} (\Theta^{-1})_{ur} k(x_l,x_j) - \sum_{i=1}^n \alpha_i \delta_{u t_i} k(x_l,x_i),\nonumber
\end{align}
where $\delta$ is the Kronecker symbol, that is $\delta_{u t_i} = \begin{cases} 1 & \textrm{ if } u=t_i,\\ 0 & \textrm{ else } \end{cases}$.
Solving for the global minimizer $\beta^*$ yields 
\begin{equation}\label{eq:beta-alpha}
 \beta^*_{jr} = \alpha_j \Theta_{r t_j}.
\end{equation}
Plugging $\beta^*$ back into the above expressions yields
\begin{align}
\sum_{r,s=1}^T \sum_{i,j=1}^n 
\big(\Theta^{-1}\big)_{sr} \beta_{is}\beta_{jr} k(x_i,x_j) 
&= \sum_{r,s=1}^T \sum_{i,j=1}^n 
\big(\Theta^{-1}\big)_{sr}\Theta_{s t_i} \Theta_{r t_j} \alpha_i \alpha_j  k(x_i,x_j)\nonumber\\
&=\sum_{i,j=1}^n \Theta_{t_i t_j} \alpha_i \alpha_j  k(x_i,x_j),\\
 \sum_{i,j=1}^n \alpha_{i}\beta_{j t_i} k(x_j,x_i)  &= \sum_{i,j=1}^n \alpha_i \alpha_j \Theta_{t_j t_i} k(x_j,x_i),
\end{align}
Introducing $\alpha^r=(\alpha_{i})_{t_i=r}$, $K_{rs}=\big(k(x_i,x_j)\big)_{t_i=r, t_j=s}$ and gathering the terms corresponding to 
the individual tasks we get
\[ \sum_{i,j=1}^n \alpha_i \alpha_j \Theta_{t_j t_i} k(x_j,x_i) = \sum_{r,s=1}^T \inner{\alpha^r, K_{rs} \alpha^{s}}.\]
Plugging all the expressions back into \eqref{eq:dual-function}, we get the dual function as
\begin{align}
 q(\alpha) &= -C\,L_{ti}^*(-\frac{\alpha_{ti}}{C}) + \minop_{\Theta \in S^T_+} \lambda\,V(\Theta) - \frac{1}{2}\sum_{r,s=1}^T \Theta_{rs}\inner{\alpha^r, K_{rs} \alpha^s}\\
           &= -C\,L_{ti}^*(-\frac{\alpha_{ti}}{C}) + \lambda \minop_{\Theta \in S^T_+}  V(\Theta) - \inner{\rho,\Theta}\\
           &= -C\,L_{ti}^*(-\frac{\alpha_{ti}}{C}) - \lambda \maxop_{\Theta \in S^T_+}  \inner{\rho,\Theta} - V(\Theta) 
\end{align}
where we have introduced in the second step $\rho \in \R^{T \times T}$ with
\[ \rho_{rs} = \frac{1}{2\lambda} \inner{\alpha^r, K_{rs} \alpha^s}, \quad r,s=1,\ldots,T.\]
Note that $\rho$ is a Gram matrix and thus positive semidefinite. The expression for the prediction function is obtained by plugging
\eqref{eq:beta-alpha} into \eqref{eq:pred-beta}.
\end{proof}
\fi

We now focus on the remaining maximization problem in the dual function in \eqref{eqn:outputKernelDual1}
\begin{equation} \label{eq:conjugate}
  \maxop_{\Theta\in S^T_+}  \frac{1}{2\lambda}\sum_{r,s=1}^T \Theta_{rs}\inner{\alpha^r, K_{rs}\alpha^{s}}  -  V(\Theta).
\end{equation}
This is a semidefinite program which is computationally expensive to solve and thus prohibits to scale the output kernel learning problem
to a large number of tasks. However, we show in the following that this problem has an analytical solution for a subset of the regularizers $V(\Theta)=\frac{1}{2}\sum_{r,s=1}^T |\Theta_{rs}|^p$ for $p\geq 1$. For better readability we defer a more general result towards the end of the section.
The basic idea is to relax the constraint on $\Theta \in R^{T \times T}$ in \eqref{eq:conjugate} so that it is equivalent to the computation of the conjugate $V^*$ of $V$.
If the maximizer of the relaxed problem is positive semi-definite, one has found the solution of the original problem.
\begin{theorem}\label{th:duality}
Let $k \in \N$ and $p=\frac{2k}{2k-1}$, then with $\rho_{rs}=\frac{1}{2\lambda}\inner{\alpha^r, K_{rs}\alpha^{s}}$ we have 
\begin{align}\label{eq:dual-pnorm}
 \maxop_{\Theta\in S^T_+} \sum_{r,s=1}^T \Theta_{rs}\rho_{rs}  - \frac{1}{2} \sum_{r,s=1}^T |\Theta_{rs}|^p 
\,=\,\frac{1}{4k-2} \Big(\frac{2k-1}{2k\lambda}\Big)^{2k} \sum_{r,s=1}^T \inner{\alpha^r, K_{rs}\alpha^{s}}^{2k},
\end{align}
and the maximizer is given by the positive semi-definite matrix
\begin{align}\label{eq:dual-pnorm-maximizer}
 \Theta^*_{rs} = \Big(\frac{2k-1}{2k\lambda}\Big)^{2k-1} \inner{\alpha^r, K_{rs}\alpha^{s}}^{2k-1}, \quad r,s=1,\ldots,T.
\end{align}
\end{theorem}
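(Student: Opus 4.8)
The plan is to follow the route sketched right before the statement: drop the constraint $\Theta\in S^T_+$, solve the resulting unconstrained problem in closed form, and then verify \emph{a posteriori} that its maximizer happens to lie in $S^T_+$, so that the relaxation is tight. Concretely, since enlarging the feasible set from $S^T_+$ to all of $\R^{T\times T}$ cannot decrease the supremum, and since the relaxed objective $\sum_{r,s}\big(\Theta_{rs}\rho_{rs}-\tfrac12\abs{\Theta_{rs}}^p\big)$ separates into $T^2$ independent scalar problems $\max_{\theta\in\R}\big(\theta\rho_{rs}-\tfrac12\abs{\theta}^p\big)$, the first task is just to compute the conjugate of $\theta\mapsto\tfrac12\abs{\theta}^p$. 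For $p>1$ this is strictly convex and superlinear, so each scalar problem has the unique maximizer $\theta^\star=\sgn(\rho_{rs})\,\big(2\abs{\rho_{rs}}/p\big)^{1/(p-1)}$ from the first-order condition. The whole point of restricting to $p=\tfrac{2k}{2k-1}$ is that then $\tfrac{1}{p-1}=2k-1$ and $(2k-1)p=2k$; substituting these and $\rho_{rs}=\tfrac{1}{2\lambda}\inner{\alpha^r,K_{rs}\alpha^s}$ turns $\theta^\star$ into exactly \eqref{eq:dual-pnorm-maximizer}, and the optimal value of each scalar problem into $\tfrac{p-1}{2}\big(2\abs{\rho_{rs}}/p\big)^{2k}=\tfrac{1}{4k-2}\big(\tfrac{2k-1}{2k\lambda}\big)^{2k}\inner{\alpha^r,K_{rs}\alpha^s}^{2k}$; summing over $r,s$ gives the right-hand side of \eqref{eq:dual-pnorm}. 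This part is routine bookkeeping with the conjugate of a power function.

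The crux is showing that the relaxed maximizer $\Theta^\star$ is positive semidefinite. Write $\Theta^\star_{rs}=c\,M_{rs}^{\,2k-1}$ with $c=\big(\tfrac{2k-1}{2k\lambda}\big)^{2k-1}>0$ and $M_{rs}:=\inner{\alpha^r,K_{rs}\alpha^s}$. First, $M\in S^T_+$: extending each $\alpha^r$ (a vector indexed by $\{i:t_i=r\}$) by zeros to a vector $u^r\in\R^n$, one has $M_{rs}=(u^r)^\top K\, u^s$ where $K=(k(x_i,x_j))_{i,j=1}^n\succeq0$ is the full kernel Gram matrix; hence $M$ is itself a Gram matrix (in the semi-inner product induced by $K$) and is p.s.d. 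Second, because $2k-1$ is a \emph{positive integer}, $\Theta^\star=c\,M^{\odot(2k-1)}$ is a positive multiple of the $(2k-1)$-fold Hadamard power of $M$, and the Schur product theorem (the Hadamard product of p.s.d.\ matrices is p.s.d., applied inductively) yields $\Theta^\star\in S^T_+$. For $k=1$, i.e.\ $p=2$, this degenerates to $\Theta^\star=cM\succeq0$.

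It remains to assemble the pieces: since $\Theta^\star\in S^T_+$ it is feasible for the original constrained maximization, so the constrained maximum is at least the value it attains, which by Step 1 equals the relaxed maximum; combined with the trivial reverse inequality, the constrained and relaxed maxima coincide and equal the right-hand side of \eqref{eq:dual-pnorm}, with maximizer \eqref{eq:dual-pnorm-maximizer} (unique by strict concavity of the objective for $p>1$). The only genuine obstacle is Step 2, and it rests on precisely two facts: that $\big(\inner{\alpha^r,K_{rs}\alpha^s}\big)_{r,s}$ is a Gram matrix, and that entrywise raising a p.s.d.\ matrix to a positive integer power preserves positive semidefiniteness — which is exactly why the admissible exponents are $p=2k/(2k-1)$, so that $1/(p-1)=2k-1\in\N$. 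Everything else reduces to the conjugate computation for $\tfrac12\abs{\cdot}^p$.
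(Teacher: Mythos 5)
Your proposal is correct and follows essentially the same route as the paper: relax the constraint $\Theta\in S^T_+$ to $\Theta\in\R^{T\times T}$, solve the separable scalar problems via the first-order/conjugate computation (yielding exactly \eqref{eq:dual-pnorm-maximizer} and the value in \eqref{eq:dual-pnorm}), and verify a posteriori that the relaxed maximizer is positive semidefinite so the relaxation is tight. The only difference is in how the key p.s.d.\ step is justified: the paper cites Horn's characterization of entrywise powers preserving positive semidefiniteness, whereas you obtain the needed direction elementarily from the Schur product theorem applied inductively to the integer Hadamard power, together with an explicit zero-padding argument showing that $\big(\inner{\alpha^r,K_{rs}\alpha^s}\big)_{r,s}$ is a Gram matrix -- an equally valid and somewhat more self-contained justification.
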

\iflongversion
\begin{proof}
We relax the constraints and solve
\[ \maxop_{\Theta\in \R^{T \times T}}  \frac{1}{2\lambda}\sum_{r,s=1}^T \Theta_{rs}\inner{\alpha^r, K_{rs}\alpha^{s}}  - \frac{1}{2} \sum_{r,s=1}^T |\Theta_{rs}|^p. \]
Note that the problem is separable and thus we can solve for each component separately,
\[ \maxop_{\Theta_{rs} \in \R} \frac{1}{2\lambda}\Theta_{rs}\inner{\alpha^r, K_{rs}\alpha^{s}}  -  \frac{1}{2} |\Theta_{rs}|^p.\]
The optimality condition for $\Theta^*_{rs}$ becomes with $\rho_{rs}=\frac{1}{2\lambda}\inner{\alpha^r, K_{rs}\alpha^{s}}$,
\[ 0 = \rho_{rs} - \frac{p}{2} \sign(\Theta^*_{rs})|\Theta^*_{rs}|^{p-1} \;\, \Longrightarrow \;\, \Theta^*_{rs} = \Big(\frac{2}{p}\Big)^\frac{1}{p-1} \,\sign(\rho_{rs})|\rho_{rs}|^\frac{1}{p-1}.\]
The solution of the relaxed problem is the solution of the original constrained problem, if we can show that the corresponding maximizer is positive semidefinite.
Note that $\rho_{rs}=\frac{1}{2\lambda}\inner{\alpha^r, K_{rs}\alpha^{s}}$ is a positive semidefinite (p.s.d.) matrix as it is a Gram matrix. The factor $\Big(\frac{2}{p}\Big)^\frac{1}{p-1}$ 
is positive and thus the resulting matrix is p.s.d. if $\sign(\rho_{rs})|\rho_{rs}|^\frac{1}{p-1}$ is p.s.d.

It has been shown \cite{Horn1969}, that the elementwise power $A_{rs}^l$ of a positive semidefinite matrix $A$ is positive definite for all $A \in S^T_+$ and $T \in \N$ 
if and only if $l$ is a positive integer. Note that we have an elementwise integer power of $\Theta$ if $\frac{1}{p-1}$ is an odd positive integer (the case of an even integer is ruled out by Theorem \ref{th:schoenberg}), that is $\frac{1}{p-1}=2k-1$ for $k \in \N$ as in this
case we have
\[ \Theta^*_{rs} = \Big(\frac{2}{p}\Big)^{2k-1} \,\sign(\rho_{rs})|\rho_{rs}|^{2k-1} = \Big(\frac{2}{p}\Big)^{2k-1} \rho_{rs}^{2k-1} =  \Big(\frac{2k-1}{2k\lambda}\Big)^{2k-1} \inner{\alpha^r,K_{rs}\alpha^s}^{2k-1}  .\]
We get the admissible values of $p$ as $p=\frac{2k}{2k-1}$, $k \in \N$ (resp. $2k=\frac{p}{p-1}$). We compute the optimal objective value as 
\begin{align}
\sum_{r,s=1}^T \rho_{rs}^{2k} \Big( \Big(\frac{2}{p}\Big)^{2k-1} - \frac{1}{2} \Big(\frac{2}{p}\Big)^{2k}\Big)&=(p-1)\frac{1}{2} \Big(\frac{2}{p}\Big)^{2k} \sum_{r,s=1}^T \rho_{rs}^{2k} = \frac{1}{4k-2}\Big(\frac{2k-1}{k}\Big)^{2k}\sum_{r,s=1}^T \rho_{rs}^{2k}\\
&=  \frac{1}{4k-2}\Big(\frac{2k-1}{2\lambda\,k}\Big)^{2k} \sum_{r,s=1}^T\inner{\alpha^r,K_{rs}\alpha^s}^{2k}
\end{align}
\end{proof}
\fi
Plugging the result of the previous theorem into the dual function of Lemma \ref{le:Fenchel} we get for $k \in \N$ and 
$p=\frac{2k}{2k-1}$ with $V(\Theta)=\norm{\Theta}_p^p$ the following unconstrained dual of our main problem \eqref{eq:main3}:
\begin{equation}\label{eq:final-dual}
\maxop_{\alpha \in \R^n} -C\sum_{i=1}^n L_{i}^*\Big(-\frac{\alpha_{i}}{C}\Big) - \frac{\lambda}{4k-2} \Big(\frac{2k-1}{2k\lambda}\Big)^{2k} \sum_{r,s=1}^T \inner{\alpha^r,K_{rs}\alpha^s}^{2k}.
\end{equation}
Note that by doing the variable transformation $\kappa_i:=\frac{\alpha_i}{C}$ we effectively have only one hyper-parameter in \eqref{eq:final-dual}.  This allows us to cross-validate more efficiently.
The range of admissible values for $p$ in Theorem \ref{th:duality} lies in the interval $(1,2]$, where we get for $k=1$ the value $p=2$ and as $k\rightarrow \infty$ we have $p\rightarrow 1$. The regularizer for $p=2$ together with the squared loss has been considered in the primal in \cite{Dinuzzo11,Ciliberto15}. Our analytical expression of the dual is novel and allows us to employ stochastic dual coordinate ascent to solve the involved primal optimization problem. Please also note that by optimizing the dual, we have access to the duality gap and thus a well-defined
stopping criterion. This is in contrast to the alternating scheme of \cite{Dinuzzo11,Ciliberto15} for the primal problem which involves costly matrix operations. 
Our runtime experiments show that our solver for \eqref{eq:final-dual} outperforms the solvers of \cite{Dinuzzo11,Ciliberto15}. Finally, note that even for suboptimal dual
variables $\alpha$, the corresponding $\Theta$ matrix in \eqref{eq:dual-pnorm-maximizer} is positive semidefinite. Thus we always get a feasible set of
primal variables.

\begin{table}\centering
{
\caption{Examples of regularizers $V(\Theta)$ together with their generating function $\phi$ and the explicit form of $\Theta^*$ in terms of the dual
variables, $\rho_{rs}=\frac{1}{2\lambda}\inner{\alpha^r, K_{rs}\alpha^{s}}$. The optimal value of \eqref{eq:conjugate} is given in terms of $\phi$ as $\maxop_{\Theta \in \R^{T \times T}} \inner{\rho,\Theta} - V(\Theta) = \sum_{r,s=1}^T \phi(\rho_{rs})$.
}\label{tab:examples}
\begin{tabular}{l|l|l}
\hline
$\phi(z)$         & $V(\Theta)$ 
& $\Theta_{rs}^*$\\
\hline
$\frac{z^{2k}}{2k}$,\, $k\in \N$ & $\frac{2k-1}{2k}\sum\limits_{r,s=1}^T |\Theta_{rs}|^\frac{2k}{2k-1}$ 
& $\rho_{rs}^{2k-1}$\\
\hline
$e^z=\sum_{k=0}^\infty \frac{z^k}{k!}$                         & $\begin{cases} \sum\limits_{r,s=1}^T \Theta_{rs} \log(\Theta_{rs}) - \Theta_{rs} & \textrm{ if } \Theta_{rs}>0 \forall r,s\\ \infty & \textrm{ else .}\end{cases}$ 
& $e^{\rho_{rs}}$\\
\hline
$\cosh(z)-1=\sum_{k=1}^\infty \frac{z^{2k}}{(2k)!}$ &  
$\sum\limits_{r,s=1}^T \Big(\Theta_{rs} \arcsinh(\Theta_{rs}) - \sqrt{1+\Theta_{rs}^2}\Big) +T^2$ 
& $\arcsinh(\rho_{rs})$\\
\hline
\end{tabular}
}
\end{table}

\paragraph{Characterizing the set of convex regularizers $V$ which allow an analytic 
expression for the dual function}
The previous theorem raises the question for which class of convex, separable regularizers we can get an analytical expression of the dual function by explicitly solving the optimization problem \eqref{eq:conjugate} over the positive semidefinite cone. A key element in the proof of the previous theorem is the characterization of functions $f:\R \rightarrow \R$ which when applied elementwise $f(A)=(f(a_{ij}))_{i,j=1}^T$ to a positive semidefinite matrix $A \in S^T_+$ result in a p.s.d. matrix, that is $f(A) \in S^T_+$. This set of functions has been characterized by Hiai \cite{Hiai09}.
\begin{theorem}[\cite{Hiai09}]\label{th:schoenberg} 
Let $f:\R \rightarrow \R$ and $A \in S^T_+$. We denote by $f(A)=(f(a_{ij}))_{i,j=1}^T$ the elementwise application of $f$ to $A$.
It holds 
$\forall \,T \geq 2, \quad A \in S^T_+ \Longrightarrow f(A) \in S^T_+$
if and only if $f$ is analytic and $f(x)=\sum_{k=0}^\infty a_k x^k$ with $a_k\geq 0$ for all $k\geq 0$.
\end{theorem}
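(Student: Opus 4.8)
The plan is to prove the two implications separately, since they are of very different difficulty.

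\emph{Sufficiency.} Suppose $f(x)=\sum_{k=0}^\infty a_kx^k$ with every $a_k\ge 0$. Because $f$ is finite on all of $\R$, this series has infinite radius of convergence, so for a fixed $A\in S^T_+$ the partial sums $\sum_{k=0}^N a_k\,A^{\circ k}$ (entrywise powers) converge to $f(A)$ entrywise, hence in norm. Each term is p.s.d.: $A^{\circ 0}=\bone\bone^\top$, and for $k\ge 1$ the Schur product theorem gives $A^{\circ k}\in S^T_+$ — write $A=BB^\top$, so that $A\circ A$ is a principal submatrix of $A\otimes A\succeq 0$, and iterate. A nonnegative combination of p.s.d.\ matrices is p.s.d.\ and $S^T_+$ is closed, so $f(A)\in S^T_+$; this holds for every $T$ and every $A$.

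\emph{Necessity.} Assume $A\in S^T_+\Rightarrow f(A)\in S^T_+$ for all $T\ge 2$. First I would extract low-dimensional consequences: diagonal matrices force $f(x)\ge 0$ for $x\ge 0$, and the rank-one $2\times2$ Gram matrix $\bigl(\begin{smallmatrix} t & s\\ s & t\end{smallmatrix}\bigr)$ with $|s|\le t$ being p.s.d.\ forces $\bigl(\begin{smallmatrix} f(t) & f(s)\\ f(s) & f(t)\end{smallmatrix}\bigr)\succeq 0$, i.e.\ $f(t)\ge|f(s)|$; with $s=t$ this says $f$ is nondecreasing on $[0,\infty)$, and varying the sign of $s$ gives $|f(x)|\le f(|x|)$, so $f$ is in addition locally bounded on $\R$. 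Next, identify $S^T_+$ with the set of Gram matrices: the hypothesis is equivalent to $\bigl(f(\inner{v_i,v_j})\bigr)_{ij}\succeq 0$ for every finite family $v_1,\dots,v_n$ in $\R^n$. Restricting the $v_i$ to the sphere of radius $r$ in $\R^m$ shows that $t\mapsto f(r^2t)$ is a positive definite function on $S^{m-1}$ for every $m$; Schoenberg's characterization of functions positive definite on all Euclidean spheres then yields $f(r^2t)=\sum_{k\ge0}b_k^{(r)}t^k$ with $b_k^{(r)}\ge0$, hence $f(x)=\sum_k a_k^{(r)}x^k$ on $[-r^2,r^2]$ with $a_k^{(r)}=r^{-2k}b_k^{(r)}\ge0$. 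By uniqueness of power series the coefficients do not depend on $r$, and letting $r\to\infty$ gives $f(x)=\sum_{k\ge0}a_kx^k$ on all of $\R$ with $a_k\ge0$ — in particular $f$ is analytic.

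The routine ingredients are the sufficiency direction and the $2\times2$ sign/monotonicity argument; the genuine obstacle is the necessity direction's core — Schoenberg's theorem together with the dimension limit $m\to\infty$, which is precisely what forces the \emph{monomial} expansion with nonnegative coefficients (rather than merely a Gegenbauer-polynomial expansion in each fixed dimension), plus the continuity of $f$ needed to invoke it, which must be extracted from local boundedness and the positive-definiteness constraints. Equivalently, one must upgrade ``nonnegative, nondecreasing, locally bounded'' to ``absolutely monotonic'' — $C^\infty$ on $[0,\infty)$ with $f^{(k)}\ge0$ for all $k$ — after which Bernstein's theorem finishes; a self-contained route would establish absolute monotonicity by deducing nonnegativity of all divided differences of $f$ from carefully chosen p.s.d.\ test matrices, in the spirit of FitzGerald--Horn. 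Since the statement is classical (Schoenberg, Rudin, Herz, FitzGerald--Horn, Hiai), one may alternatively just cite it, as the paper does.
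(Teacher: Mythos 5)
The paper does not prove this statement at all --- it is imported verbatim from Hiai's work with a citation, so there is no in-paper argument to compare against. Your sketch is the standard classical route and is essentially sound. The sufficiency half is complete: entrywise powers of a p.s.d.\ matrix are p.s.d.\ by the Schur product theorem (your Kronecker-submatrix argument is the usual proof), nonnegative combinations stay in the cone, and the cone is closed, so the entrywise limit $f(A)$ is p.s.d.\ for every $T$. The necessity half correctly reduces the problem, via Gram matrices and the radius-$r$ rescaling, to the characterization of functions positive definite on spheres of all dimensions, and the patching over $r$ by uniqueness of power-series coefficients is fine. The one genuine gap is the one you flag yourself: Schoenberg's sphere theorem in its usual form assumes $f$ continuous, and continuity is not a hypothesis here; what you actually need is Rudin's continuity-free version (or, equivalently, the FitzGerald--Horn divided-difference argument establishing absolute monotonicity followed by Bernstein's theorem), and deriving continuity from the p.s.d.\ constraints alone is the hard core of the theorem --- this is exactly why the clean statement is credited to Rudin/Hiai rather than to Schoenberg. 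Two cosmetic points: the $2\times2$ matrix $\bigl(\begin{smallmatrix} t & s\\ s & t\end{smallmatrix}\bigr)$ with $|s|\le t$ is p.s.d.\ but not rank one unless $|s|=t$, and monotonicity on $[0,\infty)$ comes from taking $0\le s\le t$ rather than $s=t$; neither affects the argument. As you note, citing the result, as the paper does, is the reasonable choice here.
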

Note that in the previous theorem the condition on $f$ is only necessary when we require the implication to hold for all $T$. If $T$ is fixed, the set of functions is larger 
and includes even (large) fractional powers, see \cite{Horn1969}. We use the stronger formulation as we want that the result holds without any restriction on the number of tasks $T$. Theorem \ref{th:schoenberg} is the key element used in our following characterization of separable regularizers of $\Theta$
which allow an analytical expression of the dual function.
\vspace*{-10pt}
\begin{theorem}\label{th:schoenberg1}
Let $\phi:\R \rightarrow \R$ be analytic on $\R$ and given as $\phi(z)=\sum_{k=0}^\infty \frac{a_k}{k+1}z^{k+1}$ where $a_k\geq 0\ \forall k\geq 0$.
If $\phi$ is convex, then, $V(\Theta):=\sum_{r,s=1}^T \phi^*(\Theta_{rs})$,
is a convex function $V:\R^{T \times T} \rightarrow \R$ and 
\begin{align}\label{eq:conj2}
\maxop_{\Theta \in \R^{T \times T}} \inner{\rho,\Theta} - V(\Theta) 
\,=\, V^*(\rho) \,=\, \sum_{r,s=1}^T \phi\big(\rho_{rs}\big),
\end{align}
where the global maximizer fulfills $\Theta^* \in S^T_+$ if $\rho \in S^T_+$ and $\Theta^*_{rs} = \sum_{k=0}^\infty a_k \rho_{rs}^k.$
\end{theorem}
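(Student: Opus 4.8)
The plan is to reduce the matrix optimization over $\R^{T \times T}$ to $T^2$ scalar optimization problems, one for each entry $\Theta_{rs}$, exactly as in the proof of Theorem~\ref{th:duality}. Since $V(\Theta) = \sum_{r,s} \phi^*(\Theta_{rs})$ is separable and the linear term $\inner{\rho,\Theta} = \sum_{r,s} \rho_{rs}\Theta_{rs}$ is separable, the maximization decouples:
\begin{equation*}
\maxop_{\Theta \in \R^{T \times T}} \inner{\rho,\Theta} - V(\Theta) = \sum_{r,s=1}^T \maxop_{\Theta_{rs}\in\R} \big( \rho_{rs}\Theta_{rs} - \phi^*(\Theta_{rs})\big) = \sum_{r,s=1}^T \phi^{**}(\rho_{rs}).
\end{equation*}
First I would establish that $V$ is well-defined and convex: $\phi^*$ is convex and closed as a conjugate of any function, and a finite sum of convex functions of individual coordinates is convex on $\R^{T\times T}$. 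Then, since $\phi$ is assumed convex and analytic (hence closed, as a finite convex function on all of $\R$), the Fenchel--Moreau theorem gives $\phi^{**} = \phi$, which yields the claimed identity $V^*(\rho) = \sum_{r,s}\phi(\rho_{rs})$ and the first-order characterization of the maximizer as $\Theta^*_{rs} = (\phi^*)'(\rho_{rs})$, equivalently the $\Theta_{rs}$ solving $\rho_{rs} = (\phi^*{}')^{-1}(\Theta_{rs})$, i.e. $\Theta^*_{rs}$ is the value at which $\phi'$ equals $\rho_{rs}$; by the inverse-function relation for conjugates this is $\Theta^*_{rs} = \phi'(\rho_{rs})$.

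Next I would compute $\phi'$ explicitly. Termwise differentiation of the power series $\phi(z) = \sum_{k=0}^\infty \frac{a_k}{k+1} z^{k+1}$ — valid on the interior of the domain of convergence since power series can be differentiated termwise there, and in fact on all of $\R$ by the analyticity hypothesis — gives $\phi'(z) = \sum_{k=0}^\infty a_k z^k$. Hence $\Theta^*_{rs} = \sum_{k=0}^\infty a_k \rho_{rs}^k$, which is precisely the stated formula. It remains to verify positive semidefiniteness: when $\rho \in S^T_+$, the matrix $\Theta^*$ is obtained by applying the function $f(x) = \sum_{k=0}^\infty a_k x^k$ elementwise to $\rho$. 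Since $a_k \geq 0$ for all $k$ and $f$ is analytic, Theorem~\ref{th:schoenberg} applies directly and gives $\Theta^* \in S^T_+$.

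The one subtlety I would be careful about — and which I expect to be the main technical point rather than a deep obstacle — is the interplay between the domain of convergence of the power series and the conjugacy argument. One must ensure that $\phi'$ as a power series genuinely represents the derivative of the convex function $\phi$ on the relevant range of $\rho_{rs}$ values, and that the supremum in the scalar problem $\sup_{t}(\rho_{rs} t - \phi^*(t))$ is attained (so that $\Theta^*_{rs}$ exists rather than being approached only in the limit). Convexity of $\phi$ together with $\phi$ being finite on all of $\R$ makes $\phi^*$ a proper closed convex function whose conjugate is $\phi$ again, and the attainment follows because $\phi$, being convex and finite on $\R$, has a subgradient at every point $\rho_{rs}$; choosing $\Theta^*_{rs}$ in $\partial\phi(\rho_{rs})$ realizes the supremum. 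Identifying that subgradient with the single value $\sum_k a_k \rho_{rs}^k$ uses that an analytic convex function is differentiable, so $\partial\phi(\rho_{rs}) = \{\phi'(\rho_{rs})\}$ is a singleton. The examples in Table~\ref{tab:examples} — $\phi(z) = z^{2k}/(2k)$, $e^z$, $\cosh(z)-1$ — then fall out by specializing the coefficients $a_k$, and I would check a couple of these as a sanity check.
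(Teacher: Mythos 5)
Your proposal is correct and follows essentially the same route as the paper's proof: separability plus Fenchel--Moreau biconjugation to get $V^*(\rho)=\sum_{r,s}\phi(\rho_{rs})$, the subgradient/conjugacy relation to identify the maximizer as $\Theta^*_{rs}=\phi'(\rho_{rs})=\sum_k a_k\rho_{rs}^k$, and Theorem~\ref{th:schoenberg} to conclude $\Theta^*\in S^T_+$. Your added care about attainment of the scalar suprema and the differentiability of the analytic convex $\phi$ only makes explicit what the paper leaves implicit.
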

\iflongversion
\begin{proof}
Note that $\phi$ is analytic on $\R$ and thus infinitely differentiable on $\R$. As $\phi$ is additionally convex, it is a proper, lower semi-continuous convex function
and thus $(\phi^*)^*=\phi$ \cite[Corollary 1.3.6]{HirLem2001}. As $\phi^*$ is convex, $V$ is a convex function and using $(\phi^*)^*=\phi$ we get
\begin{align}\label{eq:phiconj}
 \maxop_{\Theta \in \R^{T \times T}} \inner{\rho,\Theta} - V(\Theta) =  V^*(\rho)= \sum_{r,s=1}^T \phi(\rho_{rs}).
\end{align}
Finally, we show that the global maximizer has the given form. Note that as $\phi$ is a proper, lower semi-continuous convex function it holds \cite[Corollary 1.4.4]{HirLem2001}
\[ \Theta_{rs} \in \partial \phi^*(\rho_{rs}) \quad \Longleftrightarrow \quad  \rho_{rs} \in \partial\phi(\Theta_{rs}).\]
Note that the maximizer $\Theta_{rs}^*$ of problem \eqref{eq:phiconj} fulfills $\rho_{rs} \in \frac{\partial \phi^*}{\partial \Theta_{rs}}(\Theta^*_{rs})$ and thus $\Theta_{rs}^*=\frac{\partial \phi}{\partial \rho_{rs}}(\rho_{rs})$,
where we have used that $\phi$ is infinitely differentiable.
These conditions allow us to express the maximizer of \eqref{eq:conj2} in terms of $\partial \phi$. 
As $\phi$ is continuously differentiable, we get 
\[ \Theta_{rs}^*=\frac{\partial \phi}{\partial \rho_{rs}}(\rho_{rs}) = \sum_{k=0}^\infty a_k \rho_{rs}^k.\]
Note that the series has infinite convergence radius and $a_k\geq 0$ for all $k$ and thus it is of the form provided in Theorem \ref{th:schoenberg}. Thus $\Theta^* \in S^T_+$
if $\rho \in S^T_+$.
\end{proof}
\fi
Table \ref{tab:examples} summarizes e.g. of functions $\phi$, the corresponding $V(\Theta)$ and the maximizer $\Theta^*$ in  \eqref{eq:conj2}.

\iflongversion
\paragraph{Examples}
\begin{itemize}
\item First we recover the results of Theorem \ref{th:duality}. We use $\phi(x)=\frac{1}{2k} x^{2k}$ for $k \in \N$, which is convex.
      We compute 
      \[ \phi^*(y) =\sup_{x \in \R} xy - \phi(x) = \sup_{x \in \R} xy - \frac{1}{2k} x^{2k} = \frac{2k-1}{2k} |y|^\frac{2k}{2k-1},\]
      where we have used $x^*=|y|^\frac{1}{2k-1}\mathrm{sign}(y)$. We recover
      \[ V(\Theta) = \sum_{r,s=1}^T \phi^*(\Theta_{rs}) = \frac{2k-1}{2k} \sum_{r,s=1}^T \Theta^\frac{2k}{2k-1}_{rs},\]
      which with $p=\frac{2k}{2k-1}$ yields up to a positive factor the family of regularizers employed in Theorem \ref{th:duality} together with
      \[ \Theta_{rs}^* = \rho_{rs}^{2k-1} \]
\item In the second example we use $\phi(x)=e^x = \sum_{k=0}^\infty \frac{x^k}{k!}$, which is convex and the series has infinite convergence radius
      The conjugate $\phi^*$ is given as
      \[ \phi^*(y) =\sup_{x \in \R} xy - e^x = \begin{cases} y \log(y)-y & \textrm{ if } y>0\\ \infty & \textrm{ else}. \end{cases}\]
      so that the regularizer is given by,
      \[ V(\Theta) = \sum_{r,s=1}^T \phi^*(\Theta_{rs}) = \begin{cases} \sum_{r,s=1}^T \Theta_{rs} \log(\Theta_{rs}) - \Theta_{rs} & \textrm{ if } \Theta_{rs}>0\ \forall r,s=1,\ldots,T\\ \infty & \textrm{ else .}\end{cases}.\]
      This can be seen as a generalized KL-divergence between $\Theta$ and $\Theta_0$, where $\Theta_0 \in S^T_+$ is the matrix of all ones
      \[ V(\Theta) = \sum_{r,s=1}^T \phi^*(\Theta_{rs}) = \begin{cases} \sum_{r,s=1}^T \Theta_{rs} \log\left(\frac{\Theta_{rs}}{\big(\Theta_0\big)_{rs}}\right) - \Theta_{rs} + \big(\Theta_0\big)_{rs} & \textrm{ if } \Theta_{rs}>0\ \forall r,s\\ \infty & \textrm{ else .}\end{cases}.\]
      Note that adding the constant term $\sum_{r,s=1}^T \big(\Theta_0\big)_{rs}$ does not change the optimization problem \eqref{eq:main3}.
      The corresponding $\Theta^*$ is given by
      \[ \Theta_{rs}^* = \sum_{k=0}^\infty \frac{\rho_{rs}^k}{k!} = e^{\rho_{rs}}.\]
\item Next we consider $\phi(x)=\cosh(x)-1=\sum_{k=1}^\infty \frac{x^{2k}}{(2k)!}$ which is obviously convex and the series has infinite convergence radius ($e^x$ is majorant).
      The conjugate $\phi^*$ can be computed as
      \[ \phi^*(y) = \sup_{x \in \R} xy - \cosh(x)+1 = y \arcsinh(y) - \sqrt{1+y^2} +1 = y \log(y+\sqrt{y^2+1}) - \sqrt{1+y^2}+1.\]
      so that the regularizer is given by 
      \[  V(\Theta) = \sum_{r,s=1}^T \phi^*(\Theta_{rs}) = \sum_{r,s=1}^T \Big(\Theta_{rs} \arcsinh(\Theta_{rs}) - \sqrt{1+\Theta_{rs}^2} +1\Big).\]
      The corresponding $\Theta^*$ is given by
      \[ \Theta^*_{rs} = \arcsinh(\rho_{rs})=\log\big(\rho_{rs}+\sqrt{\rho_{rs}^2+1}\big).\]
      This regularizer is interpolating between a squared norm and a variant of $1$-norm. One has
      \[ \lim_{y \rightarrow 0} \phi^*(y) = \frac{y^2}{2}, \quad \lim_{y \rightarrow \infty} \phi^*(y)=|y|(\log(2|y|)-1)+1.\]
\end{itemize}
\fi

\begin{algorithm}[tb]
   \caption{\footnotesize Fast MTL-SDCA}
   \label{alg:sdca}
\begin{algorithmic}\footnotesize
   \STATE {\bfseries Input:} Gram matrix $K$, label vector $y$, regularization parameter and relative duality gap parameter $\epsilon$
   \STATE {\bfseries Output:} $\alpha$ ($\Theta$ is computed from $\alpha$ using our result in ~\ref{eq:dual-pnorm-maximizer})
   \STATE Initialize $\alpha=\alpha^{(0)}$
    \REPEAT 
      \iflongversion
      \STATE Let $\{i_1,\ldots,i_n\}$ be a random permutation of $\{1,\ldots,n\}$
      \FOR{$j=1,\ldots,n$}
      \fi
      \iflongversion
      \STATE Solve for $\Delta$ in (\ref{eqn:smallProblem}) corresponding to $\alpha_{i_j}$
      \STATE $\alpha_{i_j}\leftarrow\alpha_{i_j} + \Delta$
      \else
      \STATE Randomly choose a dual variable $\alpha_{i}$
      \STATE Solve for $\Delta$ in (\ref{eqn:smallProblem}) corresponding to $\alpha_{i}$
      \STATE $\alpha_{i}\leftarrow\alpha_{i} + \Delta$
      \fi
      \iflongversion
      \ENDFOR
      \fi
    \UNTIL{Relative duality gap is below $\epsilon$}
\end{algorithmic}
\end{algorithm}
\section{Optimization Algorithm}\label{sec:optimization}
The dual problem~\eqref{eq:final-dual} can be efficiently solved via decomposition based methods like stochastic dual coordinate ascent algorithm (SDCA)~\citep{Shalev-Shwartz13}. SDCA enjoys low computational complexity per iteration and has been shown to scale effortlessly to large scale optimization problems. 

\iflongversion
Our algorithm for learning the output kernel matrix and task parameters is summarized in Algorithm~\ref{alg:sdca}. 
\else
Our algorithm for learning the output kernel matrix and task parameters is summarized in Algorithm~\ref{alg:sdca} (refer to the supplementary material for more details). 
\fi
At each step of the iteration we optimize the dual objective over a randomly chosen $\alpha_i$ variable. 
%
Let $t_i=r$ be the task \mbox{corresponding} to $\alpha_i$. We apply the update $\alpha_{i}\leftarrow\alpha_{i}+\Delta$. The optimization problem of solving~\eqref{eq:final-dual} with respect to $\Delta$ is as follows:
\begin{align}\label{eqn:smallProblem}
  \scriptsize\minop_{\Delta\in\R}\ L_{i}^*\big((-\alpha_{i}-\Delta)/C\big) +
    \eta{\big((a\Delta^2+ 2b_{rr}\Delta+c_{rr})^{2k} + 2\sum_{s\neq r} (b_{rs}\Delta+ c_{rs})^{2k}+\sum_{s,z\neq r} c_{sz}^{2k}\big),}
\end{align}
where $a=k_{ii}$, $b_{rs}= \sum_{j:t_j=s} k_{ij}\alpha_{j}\ \forall s$, $c_{sz}=\inner{\alpha^s,K_{sz}\alpha^z}\ \forall s,z$ and $\eta=\frac{\lambda}{C(4k-2)} \Big(\frac{2k-1}{2k\lambda}\Big)^{2k}$. This one-dimensional convex optimization problem is solved efficiently via Newton method. 
The complexity of the proposed algorithm is $O(T)$ per iteration .  
The proposed algorithm can also be employed for learning output kernels regularized by generic $V(\Theta)$, discussed in the previous section.

\textbf{Special case $p=2(k=1)$}: For certain loss functions such as the hinge loss, the squared loss, etc., $L_{ti}^*\big(-\frac{\alpha_{ti}+\Delta}{C}\big)$ yields a linear or a quadratic expression in $\Delta$. In such cases problem~\eqref{eqn:smallProblem} reduces to finding the roots of a cubic equation, which has a closed form expression. Hence, our algorithm is highly efficient with the above loss functions when $\Theta$ is regularized by the squared Frobenius norm.

\section{Empirical Results}\label{sec:empiricalResults}
In this section, we present our results on benchmark data sets comparing our algorithm with existing approaches in terms of generalization accuracy as well as computational efficiency. 
\iflongversion
In Section~\ref{subsec:multi-task-exp}, we discuss generalization results in multi-task setting. We evaluate the performance of our algorithm against several recent multi-task methods that employ clustering, low-dimensional projection of input feature space or output kernel learning. Section~\ref{subsec:multi-class-exp} discusses multi-class experiment results. Single task learning (\textbf{STL}) is a common baseline in both these experiments, and it employs hinge loss and $\epsilon$-SVR loss functions for classification and regression problems respectively. Finally, in Section~\ref{sec:scaling}, we discuss the results on the computational efficiency of our algorithm. 
\else
Please refer to the supplementary material for additional results and details. 
\fi

\subsection{Multi-Task Data Sets}\label{subsec:multi-task-exp}
We begin with the generalization results in multi-task setups. 
\iflongversion
The data sets are as follows:\\
\textbf{Sarcos:} A multi-task regression data set. The aim is to predict 7 degrees of freedom of a robotic arm~\citep{Zhang09b}. \\
\textbf{Parkinson:} A multi-task regression data set~\citep{Frank10} where one needs to predict the Parkinson's disease symptom score for 42 patients.\\
\textbf{Yale:} A face recognition data set from the Yale face base with 28 binary classification tasks~\citep{Yi10}.\\
\textbf{Landmine:} A data set containing binary classification problems from 19 different landmines~\citep{Yi10}. \\
\textbf{MHC-I:} A bioinformatics data set having 10 binary classification tasks~\citep{Jacob08}.\\
\textbf{Letter:} A data set containing handwritten letters from several writers and having 9 binary classification tasks~\citep{Ji09}.

\else
The data sets are as follows: 
a) \textbf{Sarcos}: a regression data set, aim is to predict 7 degrees of freedom of a robotic arm, 
b) \textbf{Parkinson:} a regression data set, aim is to predict the Parkinson's disease symptom score for 42 patients,
c) \textbf{Yale:} a face recognition data with 28 binary classification tasks, 
d) \textbf{Landmine:} a data set containing binary classifications from 19 different landmines, 
e) \textbf{MHC-I:} a bioinformatics data set having 10 binary classification tasks, 
f) \textbf{Letter:} a handwritten letters data set with 9 binary classification tasks.\\
\fi
\iflongversion
\begin{table}\centering
\caption{Dataset statistics.
$T$ represents the number of tasks and
$m$ represents the average number of training examples per task.
}\label{table:dataStatistics}
\vspace*{1em}
\begin{tabular}{lll||lll}\toprule
 Dataset & $T$ & $m$ &  Dataset & $T$ & $m$ \\
\midrule
\midrule
 {Sarcos} & $7$ & $15$ &  {Landmine} & $19$ & $102$\\
 {Parkinson} & $42$ & $5$ &  {MHC-I} & $10$ & $24$\\
 {Yale} & $28$ & $5$ &  {Letter} & $9$ & $60$\\
 {USPS} & $10$ & $100$ & {MNIST} & $10$ & $100$\\
 {MIT Indoor67} & $67$ & $80$ & {SUN397} & $397$ & $5$, $50$\\
\bottomrule
\end{tabular}
\end{table}

Table~\ref{table:dataStatistics} presents the data set statistics.
\else
\fi
\iflongversion
We compare the following algorithms:\\
\textbf{MTL}~\cite{Evgeniou04}: A classical multi-task learning baseline. They define the $\Theta$ matrix as: $\Theta(t,t')=\frac{1}{\mu}+\delta_{tt'}$, where $\mu>0$ is a hyper-parameter and $\delta_{tt'}=1$ if $t=t'$ else $\delta_{tt'}=0$. The hyper-parameter $\mu$ is cross-validated.\\
\textbf{CMTL}~\citep{Jacob08}: A clustered multi-task learning algorithm. Tasks within a cluster are assumed to be close to a mean vector. It requires the number of task clusters as a hyper-parameter. \\
\textbf{MTFL}~\citep{mjaw12}: Learns the input kernel and the output kernel matrix as a linear combination of base kernel matrices. \\
\textbf{GMTL}~\citep{Kang11}: A clustered multi-task feature learning approach. Tasks within a cluster are assumed to share a low dimensional feature  subspace~\citep{Argyriou08}. Hence, it effectively learns both the input kernel as well as the output kernel. \\
\textbf{MTRL}~\cite{Zhang10}: A multi-task relationship learning approach. It learns a low rank output kernel matrix by enforcing a trace constraint on it.\\
\textbf{FMTL$_p$}: Our proposed multi-task learning formulation~\eqref{eq:final-dual}. We consider three different values for the $p$-norm: $p=2\ (k=1)$, $p=4/3\ (k=2)$ and $p=8/7\ (k=4)$. Hinge and $\epsilon$-SVR loss functions were used for classification and regression problems respectively.

\else
We compare the following algorithms: Single task learning ({STL}), multi-task methods learning the output kernel matrix ({MTL}~\cite{Evgeniou04}, {CMTL}~\citep{Jacob08}, {MTRL}~\cite{Zhang10}) and approaches that learn both input and output kernel matrices ({MTFL}~\citep{mjaw12}, {GMTL}~\citep{Kang11}). 
Our proposed formulation~\eqref{eq:final-dual} is denoted by {FMTL$_p$}. We consider three different values for the $p$-norm: $p=2\ (k=1)$, $p=4/3\ (k=2)$ and $p=8/7\ (k=4)$. Hinge and $\epsilon$-SVR loss functions were employed for classification and regression problems respectively.
%
%
%
%
We follow the experimental protocol\footnote{The performance of STL, MTL, CMTL and MTFL are reported from~\cite{mjaw12}. 
} described in~\cite{mjaw12}.
\fi
\iflongversion
We follow the experimental protocol\footnote{The performance of STL, MTL, CMTL and MTFL are reported from~\cite{mjaw12}. 
} described in~\cite{mjaw12}. Three-fold cross validation was performed for parameter selection. Linear kernel was employed for all data sets. Also, note that GMTL~\cite{Kang11} and MTFL~\cite{mjaw12} enjoy the advantage of both input and output kernel learning. Hence, their generalization results are not directly comparable to our method, which focuses solely on learning the output kernel matrix. 

\fi

Table~\ref{table:resultPerf} reports the performance of the algorithms averaged over ten random train-test splits. 
The proposed FMTL$_p$ attains the best generalization accuracy in general. 
It outperforms the baseline MTL as well as MTRL and CMTL, which solely learns the output kernel matrix.
Moreover, it achieves an overall better performance than GMTL and MTFL. 
The FMTL$_{p=4/3,8/7}$ give comparable generalization to $p=2$ case, with the additional benefit of learning sparser and more interpretable output kernel matrix  (see Figure~\ref{fig:theta-landmine}). 
\begin{table*}\centering\scriptsize
\caption{
Mean generalization performance and the standard deviation over ten train-test splits.
}\label{table:resultPerf}
\renewcommand{\tabcolsep}{0.45em}
\begin{tabular}{l|c|ccccc|ccc}\toprule
\multirow{2}{*}{\textbf{Data set}} &
\multirow{2}{*}{\textbf{STL}} &
\multirow{2}{*}{\textbf{MTL}} &
\multirow{2}{*}{\textbf{CMTL}} &
\multirow{2}{*}{\textbf{MTFL}} &
\multirow{2}{*}{\textbf{GMTL}} &
\multirow{2}{*}{\textbf{MTRL}} &
\multicolumn{3}{c}{\textbf{\ourmethod}}\\
 & & & & & & & $p=2$ & $p=4/3$ & $p=8/7$ \\
\midrule
\midrule
\multicolumn{10}{l}{Regression data sets: Explained Variance (\%)}\vspace*{.1em} \\
{Sarcos} &
$40.5{\pm}7.6$ & $34.5{\pm}10.2$ & $33.0{\pm}13.4$ & $49.9{\pm}6.3$ &
$45.8{\pm}10.6$ & $41.6{\pm}7.1$ & $46.7{\pm}6.9$ & $\mathbf{50.3{\pm}5.8}$ & $48.4{\pm}5.8$ \\
{Parkinson} &
$2.8{\pm}7.5$ & $4.9{\pm}20.0$ & $2.7{\pm}3.6$ & $16.8{\pm}10.8$ &
$\mathbf{33.6{\pm}9.4}$ & $12.0{\pm}6.8$ & $27.0{\pm}4.4$ & $27.0{\pm}4.4$ & $27.0{\pm}4.4$ \\
\midrule
\multicolumn{10}{l}{Classification data sets: AUC (\%)}\vspace*{.1em} \\
{Yale} &
$93.4{\pm}2.3$ & $96.4{\pm}1.6$ & $95.2{\pm}2.1$  & $\mathbf{97.0{\pm}1.6}$ &
$91.9{\pm}3.2$ & $96.1{\pm}2.1$ & $\mathbf{97.0{\pm}1.2}$ & $\mathbf{97.0{\pm}1.4}$ & $96.8{\pm}1.4$ \\
{Landmine} &
$74.6{\pm}1.6$ & $76.4{\pm}0.8$ & $75.9{\pm}0.7$ & $76.4{\pm}1.0$ &
$76.7{\pm}1.2$ & $76.1{\pm}1.0$ & $\mathbf{76.8{\pm}0.8}$ & $76.7{\pm}1.0$ & $76.4{\pm}0.9$ \\
{MHC-I} &
$69.3{\pm}2.1$ & $72.3{\pm}1.9$ & $\mathbf{72.6{\pm}1.4}$ & $71.7{\pm}2.2$ &
$72.5{\pm}2.7$ & $71.5{\pm}1.7$ & $71.7{\pm}1.9$ & $70.8{\pm}2.1$ & $70.7{\pm}1.9$ \\
{Letter} &
$61.2{\pm}0.8$ & $61.0{\pm}1.6$ & $60.5{\pm}1.1$ & $60.5{\pm}1.8$ &
$61.2{\pm}0.9$ & $60.3{\pm}1.4$ & $61.4{\pm}0.7$ & $\mathbf{61.5{\pm}1.0}$ & $61.4{\pm}1.0$ \\
\bottomrule
\end{tabular}
\end{table*}

\begin{figure}[t]\small\centering
\includegraphics[width=0.3\columnwidth]{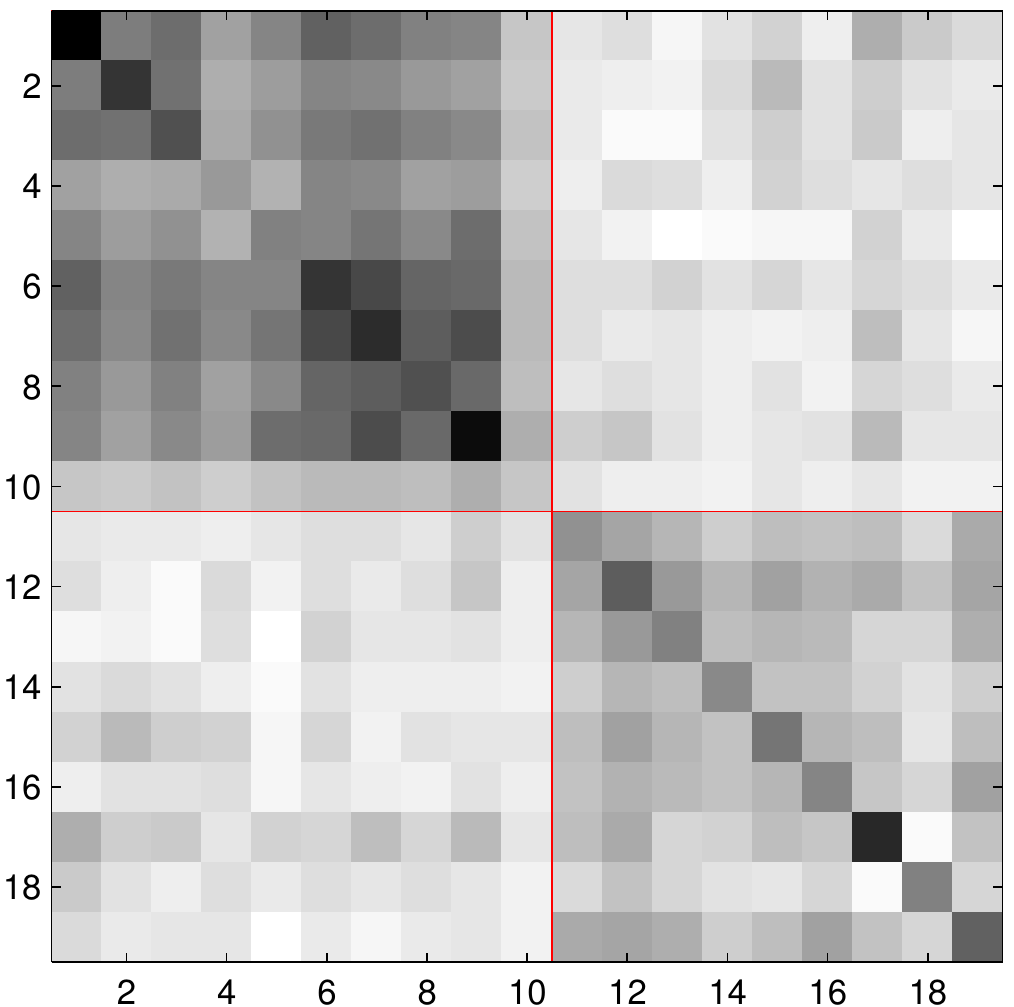}\hspace*{\fill}
\includegraphics[width=0.3\columnwidth]{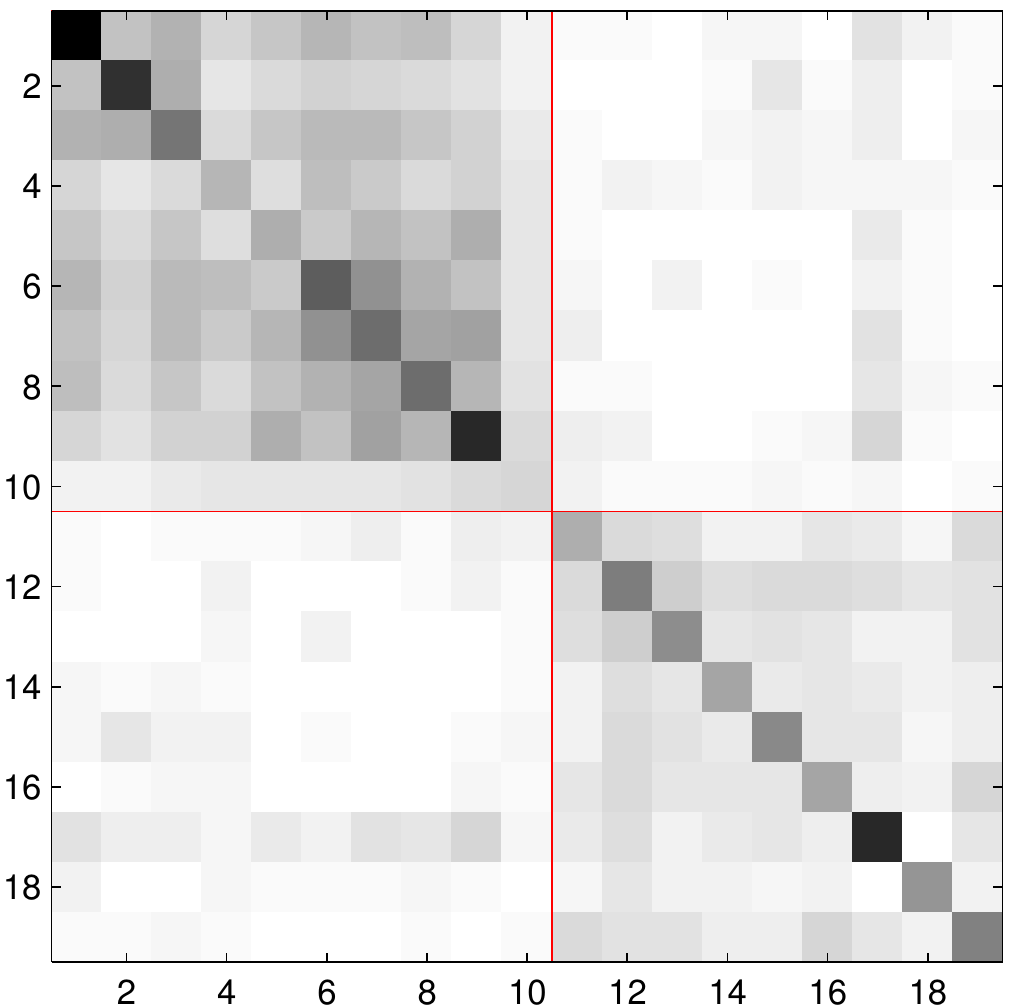}\hspace*{\fill}
\includegraphics[width=0.3\columnwidth]{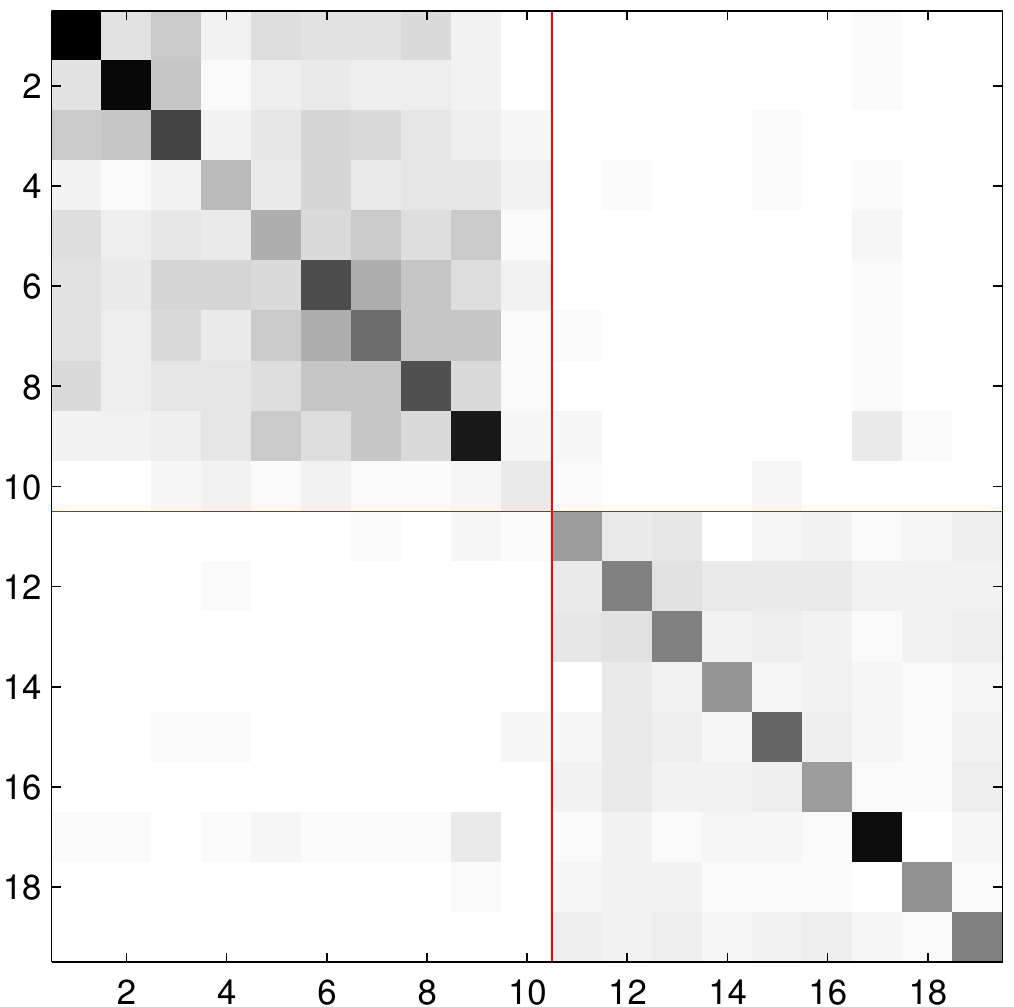}
{\hspace*{\fill}\ \ \ ($p=2$)\hspace*{\fill}\hspace*{\fill}\ \ \ \ \ \ \ \ \ \ \ \ \ \ ($p=4/3$)\hspace*{\fill}\hspace*{\fill}\ \ \ \ \ \ \ \ \ \ \ \ \  ($p=8/7$)\hspace*{\fill}  }
\caption{%
Plots of $|\Theta|$ matrices (rescaled to [0,1] and averaged over ten splits) computed by our solver \ourmethod for the Landmine data set for different $p$-norms, with cross-validated hyper-parameter values.  The darker regions indicate higher value. 
Tasks (landmines) numbered 1-10 correspond to highly foliated regions and those numbered 11-19 correspond to bare earth or desert regions. Hence, we expect two groups of tasks (indicated by the red squares). We can observe that the learned $\Theta$ matrix at $p=2$ depicts much more spurious task relationships than the ones at $p=4/3$ and $p=8/7$. Thus, our sparsifying regularizer improves interpretability. 
\iflongversion
\else
\fi
}\label{fig:theta-landmine}%
\end{figure}
\iflongversion
\else
\begin{table}[H]\centering\scriptsize
\caption{Mean accuracy and the standard deviation over five train-test splits. 
}\label{table:mnistusps}
\renewcommand{\tabcolsep}{0.35em}
\begin{tabular}{l|c|ccc|ccc|ccc}\toprule
\multirow{2}{*}{\textbf{Data set}} &
\multirow{2}{*}{\textbf{STL}} &
\multirow{2}{*}{\textbf{MTL-SDCA}} &
\multirow{2}{*}{\textbf{GMTL}} &
\multirow{2}{*}{\textbf{MTRL}} &
\multicolumn{3}{c|}{\textbf{\ourmethodHinge}}&
\multicolumn{3}{c}{\textbf{\ourmethodSquare}}\\
 & & & & & $p=2$ & $p=4/3$ & $p=8/7$ & $p=2$ & $p=4/3$ & $p=8/7$\\
\midrule
\midrule
MNIST & $84.1{\pm}0.3$ & $86.0{\pm}0.2$ & $84.8{\pm}0.3$ & $85.6{\pm}0.4$ & $86.1{\pm}0.4$ & $85.8{\pm}0.4$ & $\mathbf{86.2{\pm}0.4}$ & $82.2{\pm}0.6$ & $82.5{\pm}0.4$ & $82.4{\pm}0.3$ \\
USPS & $90.5{\pm}0.3$ & $90.6{\pm}0.2$ & $91.6{\pm}0.3$ & $92.4{\pm}0.2$ & $92.4{\pm}0.2$ & $\mathbf{92.6{\pm}0.2}$ & $\mathbf{92.6{\pm}0.1}$ & $87.2{\pm}0.4$ & $87.7{\pm}0.3$ & $87.5{\pm}0.3$\\
\bottomrule
\end{tabular}
\end{table}


\fi
\subsection{Multi-Class Data Sets}\label{subsec:multi-class-exp}
The multi-class setup is cast as $T$ one-vs-all binary classification tasks, corresponding to $T$ classes. 
In this section we experimented with two loss functions: a) {\ourmethodHinge} -- the hinge loss employed in SVMs, and b) {\ourmethodSquare} -- the squared loss employed in OKL~\citep{Dinuzzo11}. In these experiments, we also compare our results with \textbf{MTL-SDCA}, a state-of-the-art multi-task feature learning method~\citep{Lapin14}. 
\iflongversion
In addition, we report results from our KL-divergence regularized formulation with squared loss (denoted by {\ourmethodKL}). 
\fi

\iflongversion
\begin{table}[t]\centering\scriptsize
\caption{Mean accuracy and the standard deviation over five train-test splits. 
}\label{table:mnistusps}
\renewcommand{\tabcolsep}{0.5em}
\begin{tabular}{l|c|ccc|cc|cc|c}\toprule
\multirow{2}{*}{\textbf{Data set}} &
\multirow{2}{*}{\textbf{STL}} &
\multirow{2}{*}{\textbf{MTL-SDCA}} &
\multirow{2}{*}{\textbf{GMTL}} &
\multirow{2}{*}{\textbf{MTRL}} &
\multicolumn{2}{c|}{\textbf{\ourmethodHinge}}&
\multicolumn{2}{c|}{\textbf{\ourmethodSquare}}&
\multirow{2}{*}{\textbf{\ourmethodKL}}\\
 & & & & & $p=2$ & $p=8/7$ & $p=2$ & $p=8/7$ & \\
\midrule
\midrule
MNIST & $84.1{\pm}0.3$ & $86.0{\pm}0.2$ & $84.8{\pm}0.3$ & $85.6{\pm}0.4$ & $86.1{\pm}0.4$ & $\mathbf{86.2{\pm}0.4}$ & $82.3{\pm}0.6$ & $82.4{\pm}0.3$ & $82.5{\pm}0.5$\\
USPS  & $90.5{\pm}0.3$ & $90.6{\pm}0.2$ & $91.6{\pm}0.3$ & $92.4{\pm}0.2$ & $92.4{\pm}0.2$ & $\mathbf{92.6{\pm}0.1}$ & $87.2{\pm}0.4$ & $87.5{\pm}0.3$ & $87.0{\pm}0.4$\\
\bottomrule
\end{tabular}
\end{table}


\textbf{Handwritten Digit Recognition}: 
We consider the following two data sets and  follow the experimental protocol detailed in~\citep{Kang11}.\\
\textbf{USPS}: A handwritten digit data sets with 10 classes~\citep{Hull94}.
We process the images using PCA and reduce the dimensionality to 87.
This retains almost $87\%$ of variance.\\
\textbf{MNIST}: Another handwritten digit data set with 10 classes~\citep{Lecun98}.
PCA is employed to reduce the dimensionality to 64.

We use 1000, 500 and 500 examples for training, validation and test respectively. Table~\ref{table:mnistusps} reports the average accuracy achieved by various methods on both  data sets over 5 splits. 
Our approach  \ourmethodHinge obtains better accuracy than GMTL, MTRL and MTL-SDCA~\citep{Lapin14} on both data sets. 
\else
\textbf{\textbf{USPS} \& \textbf{MNIST} Experiments}: 
We followed the experimental protocol detailed in~\citep{Kang11}. Results are tabulated in Table~\ref{table:mnistusps}. 
Our approach  \ourmethodHinge  obtains better accuracy than GMTL, MTRL and MTL-SDCA~\citep{Lapin14} on both data sets. 
\fi

\textbf{MIT Indoor67 Experiments}: 
\iflongversion
We also report results on the MIT Indoor67 benchmark
\citep{zhou2014learning} which covers 67 indoor scene categories with over 100 images per class.
\else
We report results on the MIT Indoor67 benchmark
\citep{zhou2014learning} which covers 67 indoor scene categories.
\fi
We use the train/test split ($80/20$ images per class) provided by the authors. 
\iflongversion
\ourmethodSquare achieved the accuracy of ${73.1\%}$, ${73.1\%}$ and ${73.3\%}$ with $p=2,4/3$ and $8/7$ respectively. Our KL-divergence regularized approach \ourmethodKL obtained ${73.1\%}$. 
Note that these are better than the ones reported in~\citep{koskela2014convolutional} ($70.1\%$) and~\citep{zhou2014learning} ($68.24\%$). 
\else
\ourmethodSquare achieved the accuracy of ${73.3\%}$ with $p=8/7$. 
Note that 
this is better than the ones reported in~\citep{koskela2014convolutional} ($70.1\%$) and~\citep{zhou2014learning} ($68.24\%$). 
\fi

\iflongversion
\begin{table*}\centering
{\scriptsize
\caption{Mean accuracy and the standard deviation over ten train-test splits on SUN397.
}\label{table:sun397}
\renewcommand{\tabcolsep}{0.3em}
\begin{tabular}{l|c|cc|ccc|ccc|c}\toprule
\multirow{2}{*}{\textbf{$m$}} &
\multirow{2}{*}{\textbf{STL}} &
\multirow{2}{*}{\textbf{MTL}} &
\multirow{2}{*}{\textbf{MTL-SDCA}} &
\multicolumn{3}{c|}{\textbf{\ourmethodHinge}}&
\multicolumn{3}{c}{\textbf{\ourmethodSquare}}&
\multirow{2}{*}{\textbf{\ourmethodKL}}\\
 & & & & $p=2$ & $p=4/3$ & $p=8/7$ & $p=2$ & $p=4/3$ & $p=8/7$ & \\
\midrule
\midrule
\ \ $5$ & $40.5{\pm}0.9$ & $42.0{\pm}1.4$ & $41.2{\pm}1.3$& $41.5{\pm}1.1$& $41.6{\pm}1.3$ & $41.6{\pm}1.2$ & $\mathbf{44.1{\pm}1.3}$& $\mathbf{44.1{\pm}1.1}$& $44.0{\pm}1.2$ & $\mathbf{44.1{\pm}1.3}$\\
   $50$ & $55.0{\pm}0.4$ & $57.0{\pm}0.2$ & $54.8{\pm}0.3$& $55.1{\pm}0.2$& $55.6{\pm}0.3$ & $55.1{\pm}0.3$ & $\mathbf{58.6{\pm}0.1}$& $58.5{\pm}0.1$         & $\mathbf{58.6{\pm}0.2}$ & $58.4{\pm}0.1$\\
\bottomrule
\end{tabular}
}
\end{table*}

\else
\begin{table*}[t]\centering
{\scriptsize
\caption{Mean accuracy and the standard deviation over ten train-test splits on SUN397.
}\label{table:sun397}
\renewcommand{\tabcolsep}{0.6em}
\begin{tabular}{l|c|cc|ccc|ccc}\toprule
\multirow{2}{*}{\textbf{$m$}} &
\multirow{2}{*}{\textbf{STL}} &
\multirow{2}{*}{\textbf{MTL}} &
\multirow{2}{*}{\textbf{MTL-SDCA}} &
\multicolumn{3}{c|}{\textbf{\ourmethodHinge}}&
\multicolumn{3}{c}{\textbf{\ourmethodSquare}}\\
 & & & & $p=2$ & $p=4/3$ & $p=8/7$ & $p=2$ & $p=4/3$ & $p=8/7$\\
\midrule
\midrule
\ \ $5$ & $40.5{\pm}0.9$ & $42.0{\pm}1.4$ & $41.2{\pm}1.3$& $41.5{\pm}1.1$& $41.6{\pm}1.3$ & $41.6{\pm}1.2$ & $\mathbf{44.1{\pm}1.3}$& $\mathbf{44.1{\pm}1.1}$& $44.0{\pm}1.2$ \\
    $50$ & $55.0{\pm}0.4$ & $57.0{\pm}0.2$ & $54.8{\pm}0.3$& $55.1{\pm}0.2$& $55.6{\pm}0.3$ & $55.1{\pm}0.3$ & $\mathbf{58.6{\pm}0.1}$& $58.5{\pm}0.1$ &  $\mathbf{58.6{\pm}0.2}$ \\
\bottomrule
\end{tabular}
}
\end{table*}

\fi
\iflongversion 
\begin{figure}[t]\small\centering
\includegraphics[width=0.3\columnwidth]{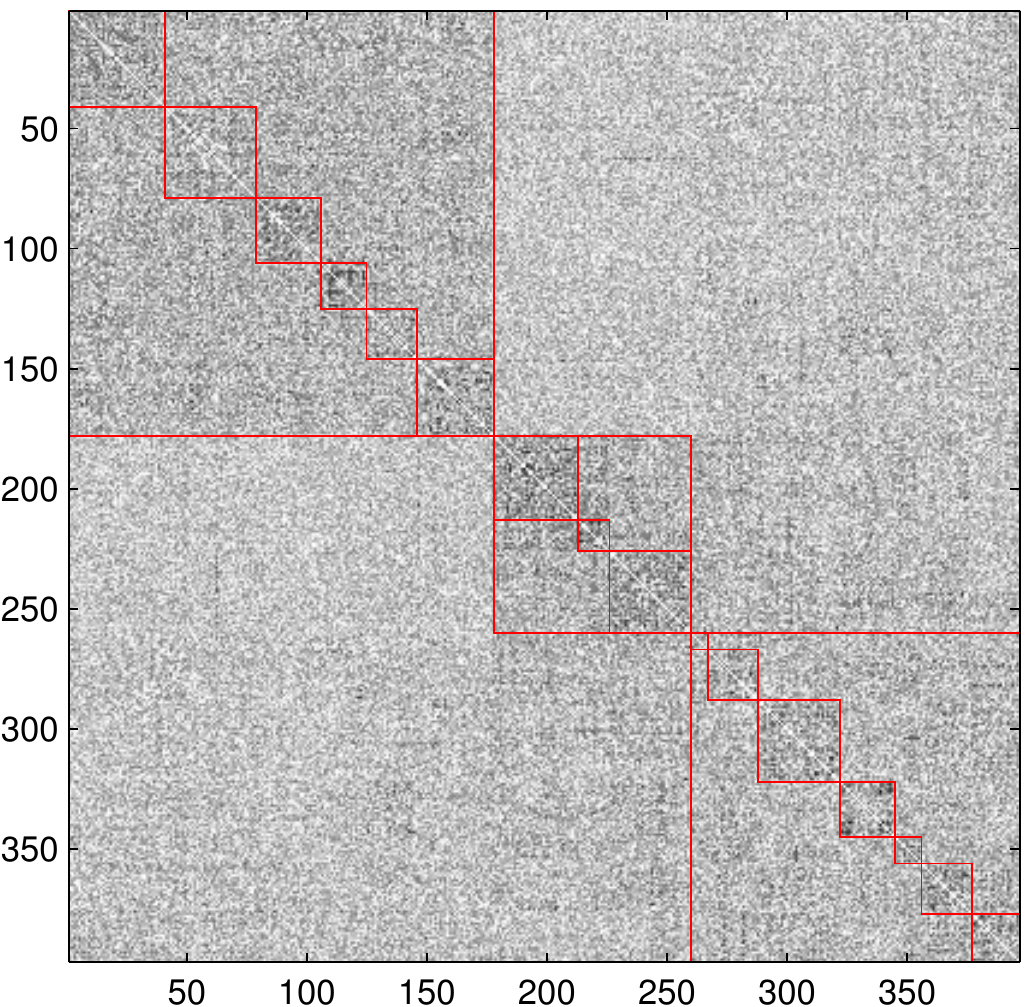}\hspace*{\fill}
\includegraphics[width=0.3\columnwidth]{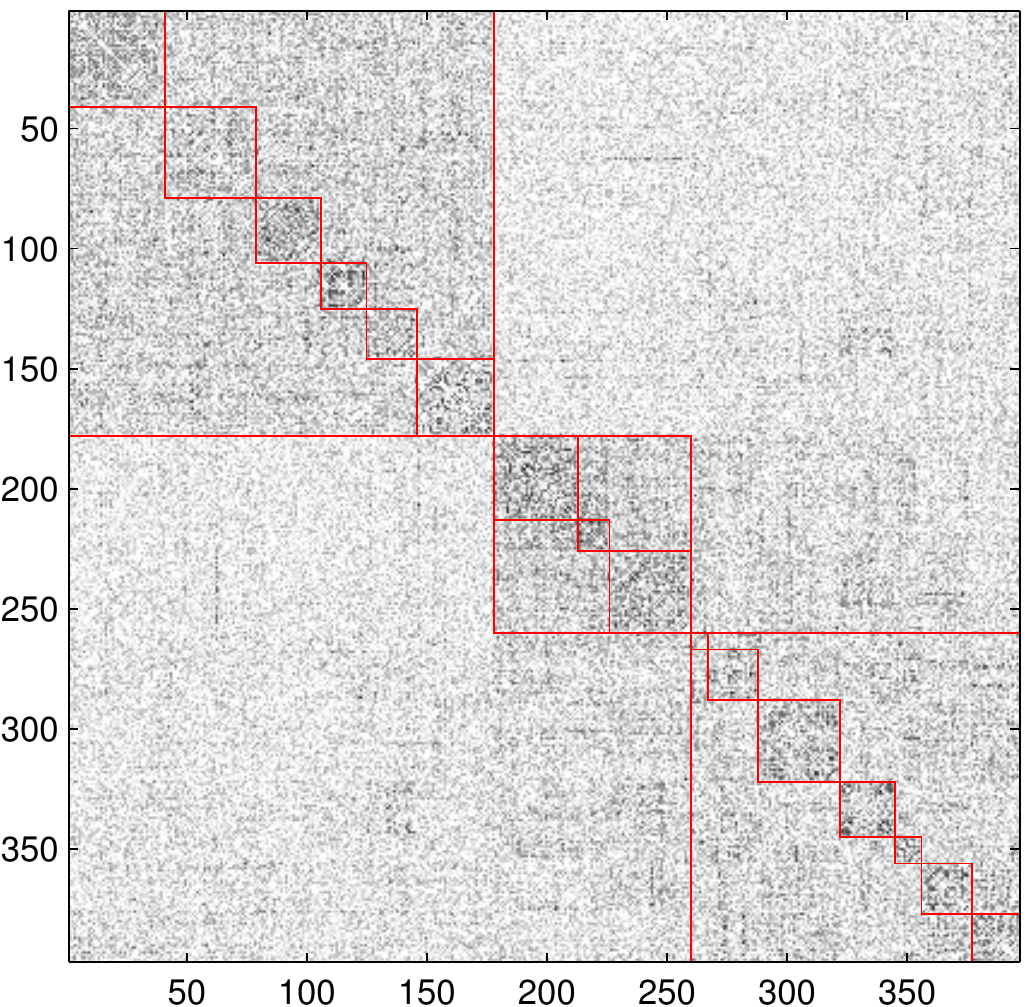}\hspace*{\fill}
\includegraphics[width=0.3\columnwidth]{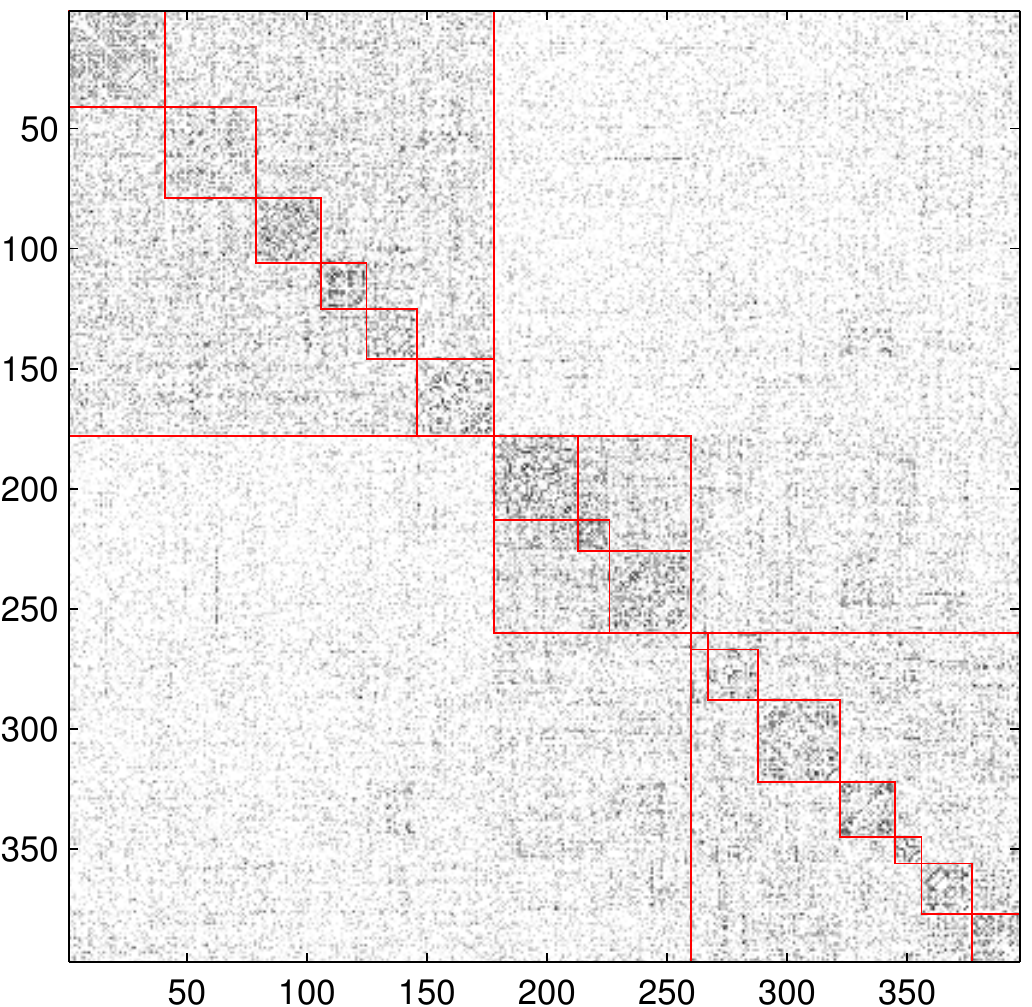}
{\hspace*{\fill}\ \ \ ($p=2$)\hspace*{\fill}\hspace*{\fill}\ \ \ \ \ \ \ \ \ \ \ \ \ \ ($p=4/3$)\hspace*{\fill}\hspace*{\fill}\ \ \ \ \ \ \ \ \ \ \ \ \  ($p=8/7$)\hspace*{\fill}  }
\caption{%
Plots of matrices $\log(1+|\Theta|)$ (rescaled to [0,1] and diagonal entries removed since they reflect high similarity of a task with itself, which is obvious) computed by our solver \ourmethodSquare
for the SUN397 data set for different $p$-norms, with cross-validated hyper-parameter values. 
The hierarchical block structure indicated by the red squares corresponds to the groups of classes available in SUN397, e.g., the top 3 super-classes are \textsl{indoor}, \textsl{outdoor-natural}, and \textsl{outdoor-man-made}, which in turn contain subgroups of classes. 
Note that this information was not used in experiments. We can observe that the learned $\Theta$ matrix at $p=2$ depicts much more spurious task relationships than the one at $p=8/7$. Thus, our sparsifying regularizer improves interpretability. Best viewed in color.
}\label{fig:theta-2}%
\end{figure}
\fi 
\textbf{SUN397 Experiments}: 
\iflongversion 
SUN397~\citep{xiao2010sun} is a challenging scene classification
benchmark \citep{zhou2014learning} with 397 scene classes
 and more than 100 images per class.
\else
SUN397~\citep{xiao2010sun} is a challenging scene classification
benchmark \citep{zhou2014learning} with 397 classes. 
\fi 
We use $m=5,50$ images per class for training, $50$ images per class for testing and report the average accuracy over the $10$ standard splits. 
\iflongversion
We employed the CNN features extracted with the convolutional neural network (CNN)
provided by \citep{zhou2014learning} using Places 205 database. 
We resized the images directly to $227\times227$
pixels, which is the size of the receptive field of that network.
The parameters were set by $2$-fold cross-validation. 
\else
We employed the CNN features extracted with the convolutional neural network (CNN)~\citep{zhou2014learning} using Places 205 database. 
\fi
The results are tabulated in Table~\ref{table:sun397}. 
\iflongversion

Figure~\ref{fig:theta-2} offers a qualitative assessment of the proposed method by showing the output kernel matrices $\Theta$ computed by our formulation  \ourmethodSquare for various $p$-norms. We can observe that the $\Theta$ matrix becomes sparser as the $p$-norm decreases from $2$ towards one. 
Enforcing sparsity helps to detect the hierarchical structure of the tasks (see caption for more details). 
\else
The $\Theta$ matrices computed by \ourmethodSquare are discussed in the supplementary material.
\fi
%


\iflongversion
\begin{figure}[H]\centering
{\includegraphics[width=0.37\columnwidth]{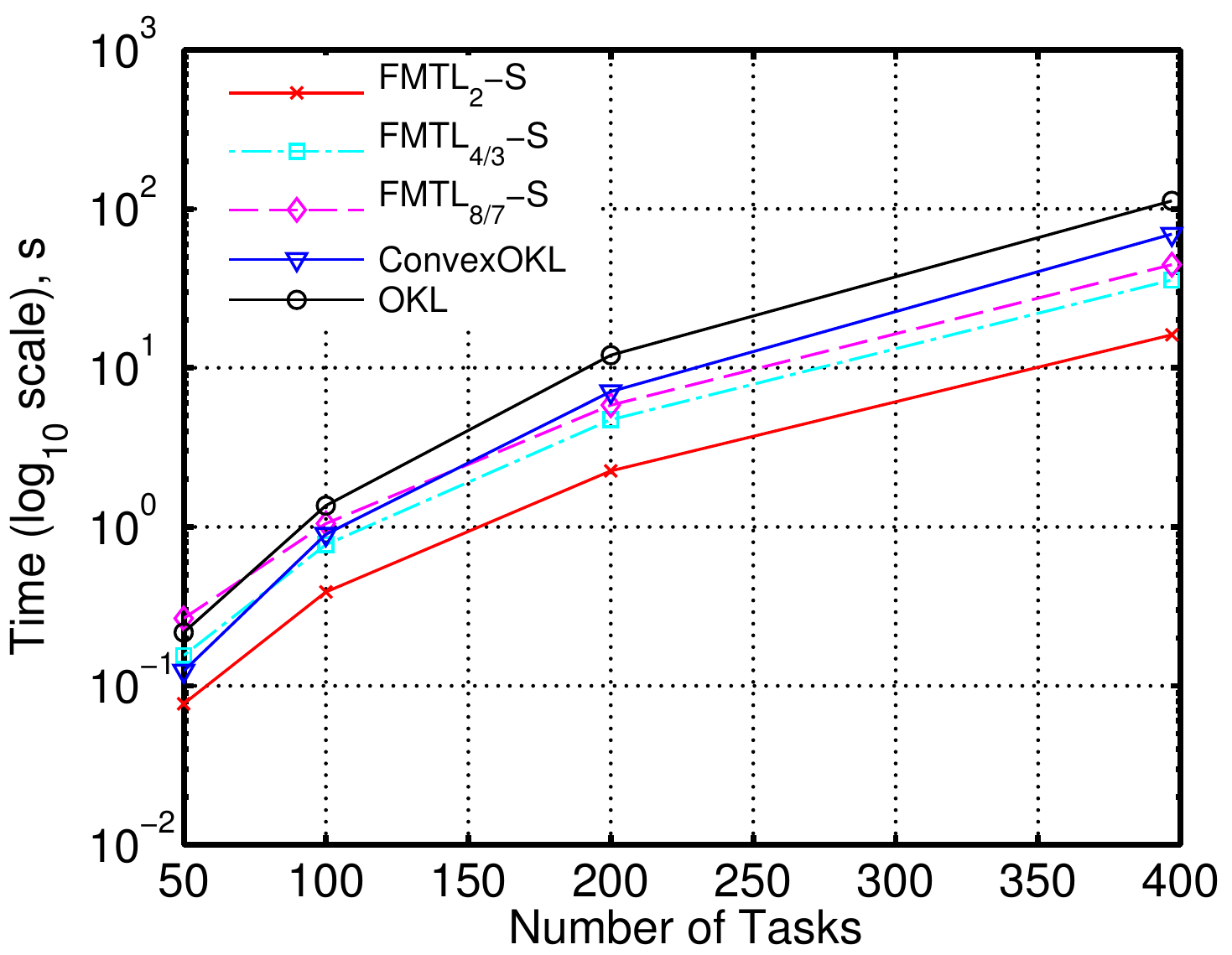}}\ \ \ \ \ 
\else
\begin{figure}\centering
{\includegraphics[width=0.37\columnwidth]{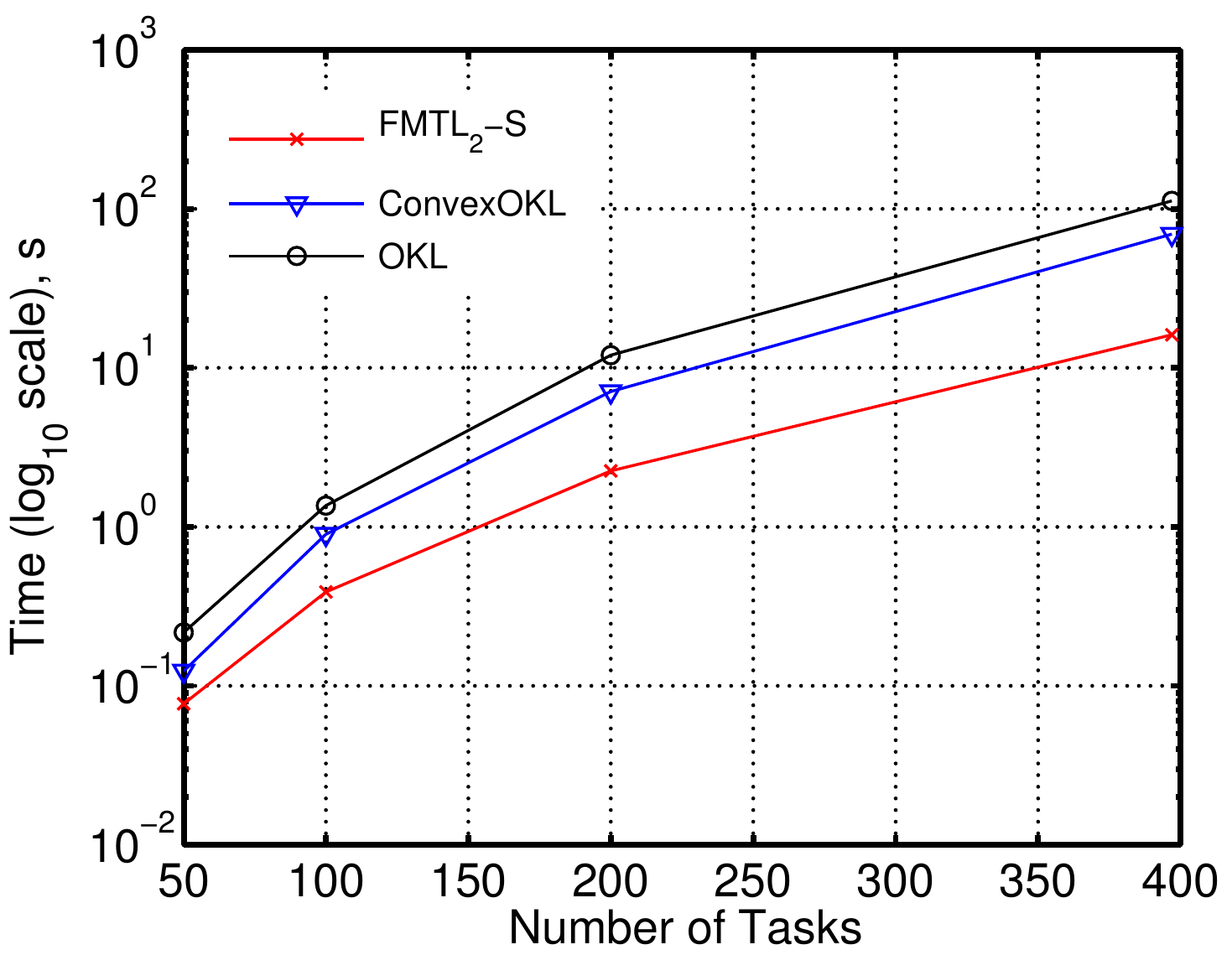}}\ \ \ \ \ 
\fi
\includegraphics[width=0.36\columnwidth]{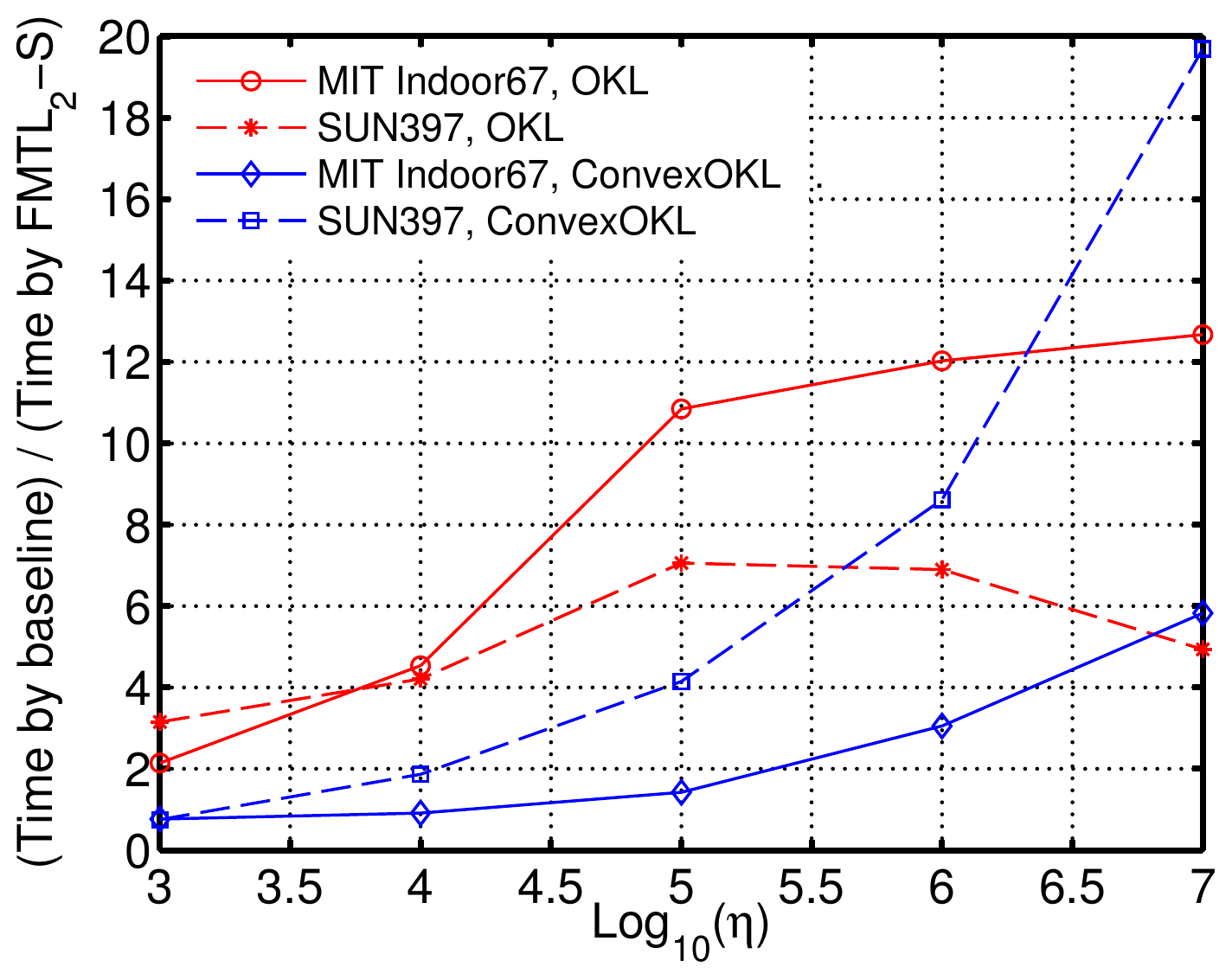}\\%
{\hspace*{\fill}\ \ \ \ \ \ \ \ \ \ \ \ \ \ \ \ \ \ \ \ (a)\hspace*{\fill}\hspace*{\fill}(b)\hspace*{\fill} \ \ \ \ \ \ \ \ \ \ \ }
\caption{
(a) Plot compares the runtime of various algorithms with varying number of tasks on SUN397. Our approach \oursquared is 7 times faster that OKL~\citep{Dinuzzo11} and 4.3 times faster than ConvexOKL~\citep{Ciliberto15} when the number of tasks is maximum. 
\iflongversion
It can be observed that \oursquared also has the best computational complexity in terms of number of tasks. 
\fi
(b) Plot showing the factor by which \oursquared outperforms OKL and ConvexOKL over the hyper-parameter range on various data sets. 
On SUN397, we outperform OKL and ConvexOKL by factors of $5.2$ and $7$ respectively. On MIT Indoor67, we are better than OKL and ConvexOKL by factors of $8.4$ and $2.4$ respectively.
}\label{fig:scaling-cnn}
\end{figure}
\iflongversion
\begin{figure}[H]\centering
\includegraphics[width=0.36\columnwidth]{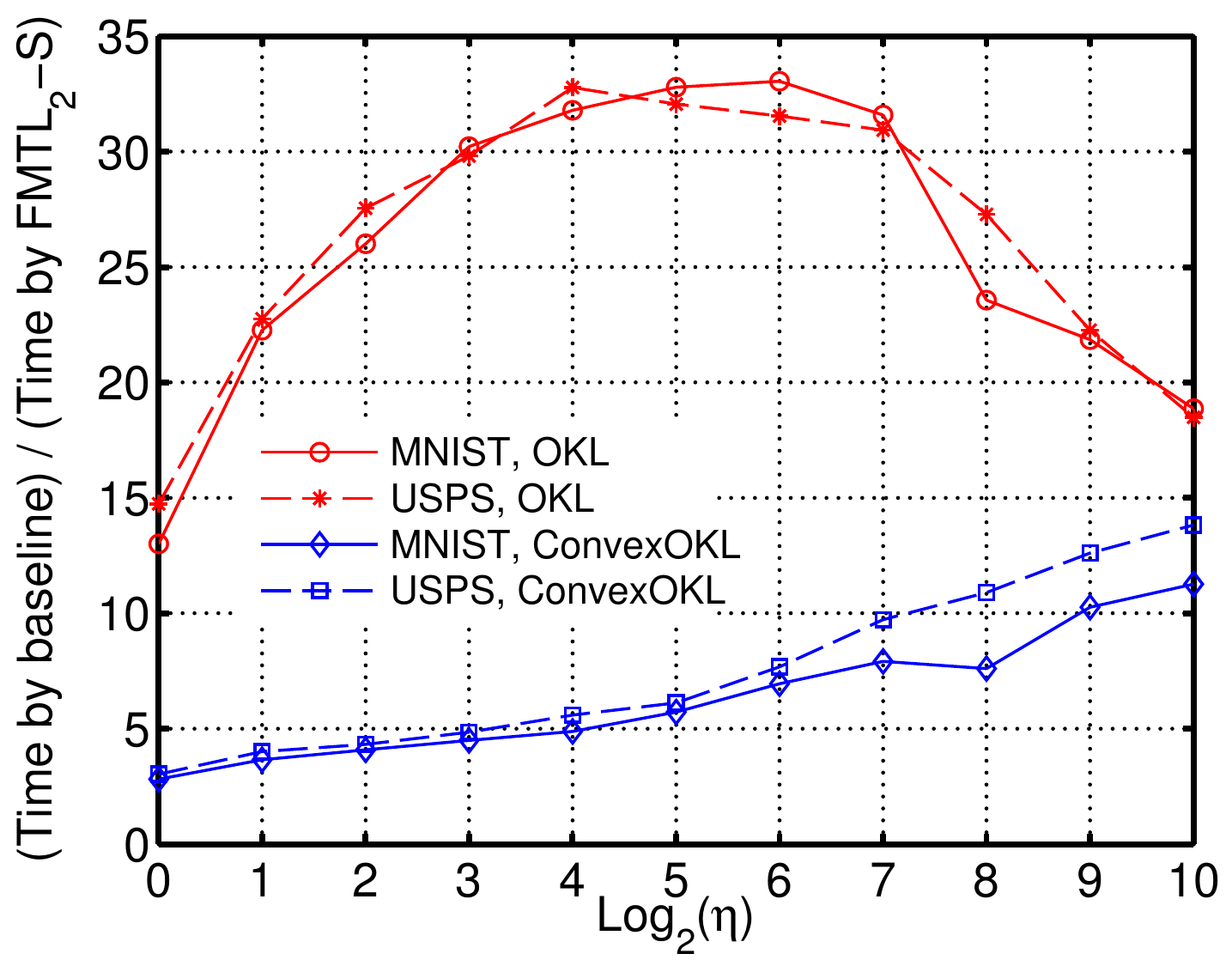}\ \ \ \ \ 
\includegraphics[width=0.37\columnwidth]{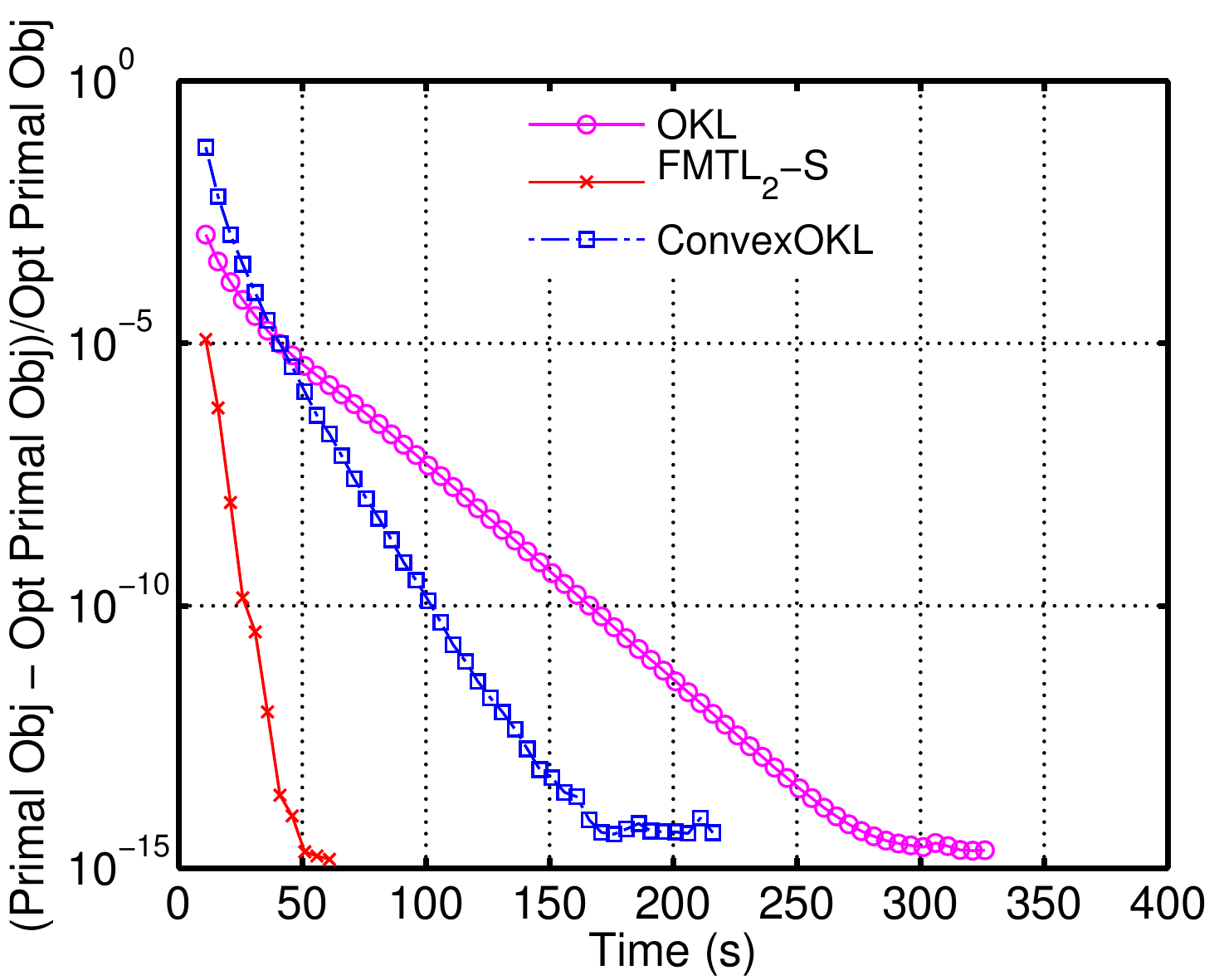}\\%
{\hspace*{\fill}\ \ \ \ \ \ \ \ \ \ \ \ \ \ \ \ \ \ \ \ (a)\hspace*{\fill}\hspace*{\fill}(b)\hspace*{\fill} \ \ \ \ \ \ \ \ \ \ \ }
\caption{(a)Plot showing the factor by which \oursquared outperforms OKL and ConvexOKL over the hyper-parameter range on MNIST and USPS data sets. 
On MNIST, we outperform OKL and ConvexOKL by factors of $25.5$ and $6.3$ respectively. On USPS, we are better than OKL and ConvexOKL by factors of $26.2$ and $7.4$ respectively. (b) Plot comparing the average rate at which the three algorithms achieve the optimal primal objective (on SUN397 data set). \oursquared was run with a duality gap of $10^{-15}$ and its primal objective value was taken to be the optimal primal objective.
}\label{fig:eta-mnist-usps}
\end{figure}
\fi
\subsection{Scaling Experiment}\label{sec:scaling}
We compare the runtime of our solver for \oursquared with the OKL solver of \citep{Dinuzzo11} and the ConvexOKL solver of \citep{Ciliberto15} on several data sets. 
All the three methods solve the same optimization problem. 
Figure~\ref{fig:scaling-cnn}a shows the result of the scaling experiment where we vary the number of tasks (classes). The parameters employed are the ones obtained via cross-validation. 
Note that both OKL and ConvexOKL algorithms do not have a well defined stopping criterion whereas our approach can easily compute the relative duality gap (set as $10^{-3}$). We terminate them when they reach the primal objective value achieved by \oursquared. 
Our optimization approach is $7$ times and $4.3$ times faster than the alternate minimization based OKL and ConvexOKL, respectively, when the number of tasks is maximal. 
The generic FMTL$_{p=4/3,8/7}$ are also considerably faster than OKL and ConvexOKL. 
\iflongversion

\fi

Figure~\ref{fig:scaling-cnn}b compares the average runtime of our \ourmethodSquare with OKL and ConvexOKL on the cross-validated range of hyper-parameter values. 
\iflongversion
 The hyper-parameter value chosen by cross-validation for SUN397 and MIT Indoor67 data sets was around $10^5$. 
\fi
\ourmethodSquare outperform them on both MIT Indoor67 and SUN397 data sets. 
\iflongversion
Figure~\ref{fig:eta-mnist-usps} shows the same comparison on MNIST and USPS data sets. 
\else
On MNIST and USPS data sets, \ourmethodSquare is more than $25$ times faster than OKL, and more than $6$ times faster than ConvexOKL. Additional details of the  above experiments are discussed in the supplementary material. 
\fi



\section{Conclusion}\label{sec:conclusion}
We proposed a novel formulation for learning the positive semi-definite output kernel matrix for multiple tasks. Our main technical contribution is our analysis of a certain class of regularizers on the output kernel matrix where one may drop the positive semi-definite constraint from the optimization problem, but still solve the problem optimally. 
This leads to a dual formulation that can be efficiently solved using stochastic dual coordinate ascent algorithm. Results on benchmark multi-task and multi-class data sets demonstrates the effectiveness of the proposed multi-task algorithm in terms of runtime as well as generalization accuracy. 

\textbf{Acknowledgments.} P.J. and M.H. acknowledge the support by the Cluster of Excellence (MMCI).

{\small
\bibliographystyle{unsrt}
\bibliography{main_bib}
}
\end{document}